%% file: main.tex
\documentclass{CVM}

\usepackage{algorithmic}
\usepackage{amsthm}
\usepackage{mathtools}
\usepackage{overpic}
\usepackage{placeins}
\graphicspath{{images/}}
\newcommand{\resultsNA}{--}

\newtheorem{theorem}{Theorem}

\theoremstyle{remark}
\newtheorem{remark}{Remark}

\CVMsetup{
  type      = {Research Article},
  title     = {Hybrid-Basis Feature Forecasting for Diffusion Sampling Acceleration},
  author    = {Kai-Liang Cheng\textsuperscript{1},
               Yuan-Yuan Cheng\textsuperscript{1},
               Yu-fan Jin\textsuperscript{1}, and
               Xiao-Ming Fu\textsuperscript{1}\\[0.45em]
               \normalfont\fontsize{9pt}{11pt}\selectfont
               \textsuperscript{1} University of Science and Technology of China, China},
  runauthor = {K.-L. Cheng, Y.-Y. Cheng, Y.-F. Jin, and X.-M. Fu},
  runtitle  = {Hybrid-Basis Feature Forecasting for Diffusion Sampling Acceleration},
  abstract  = {We propose Hybrid-Basis Feature Forecasting (\emph{HybridFF}), a training-free, plug-and-play framework for accelerating diffusion sampling. To capture local smoothness, long-range trends, and complex non-monotonic variations when modeling feature evolution, HybridFF first estimates coefficients using moving least squares (MLS) for each of multiple complementary basis families and then combines the corresponding predictors using fusion weights. In addition to the choice of basis functions, the fusion weights also play a critical role. We introduce two strategies to balance quality and speedup. HybridFF (Fixed) prioritizes efficiency with model-specific fusion weights calibrated on a small set and held constant during inference. HybridFF (Adaptive) updates the fusion weights online using branch reliability scores computed from an exponential moving average of full-step prediction errors, improving prediction fidelity and generation quality under aggressive caching while retaining substantial acceleration. Experiments across DiT-XL/2, FLUX.1-dev, SD3.5-Large, and HunyuanVideo demonstrate a favorable speedup--quality trade-off over representative single-basis forecasters and caching baselines.},
  keywords  = {diffusion models, sampling acceleration, feature forecasting, moving least squares}
}

\begin{document}

\raggedbottom
\setlength{\parskip}{0pt}
\setlength{\parindent}{1em}
\titlespacing*{\paragraph}{0pt}{0.55em plus 0.15em minus 0.10em}{0.5em}

\maketitle

\input{src_combination/introduction}

\input{src_combination/relatedwork}
\input{src_combination/method}
\input{src_combination/results}
\input{src_combination/conclusion}
\FloatBarrier

\bibliographystyle{CVMbib}
\bibliography{src_combination/reference}

\end{document}

%% file: src_combination/introduction.tex
\section{Introduction}
\label{sec:introduction}

\begin{figure*}[!ht]
    \centering
    \includegraphics[width=0.8\textwidth]{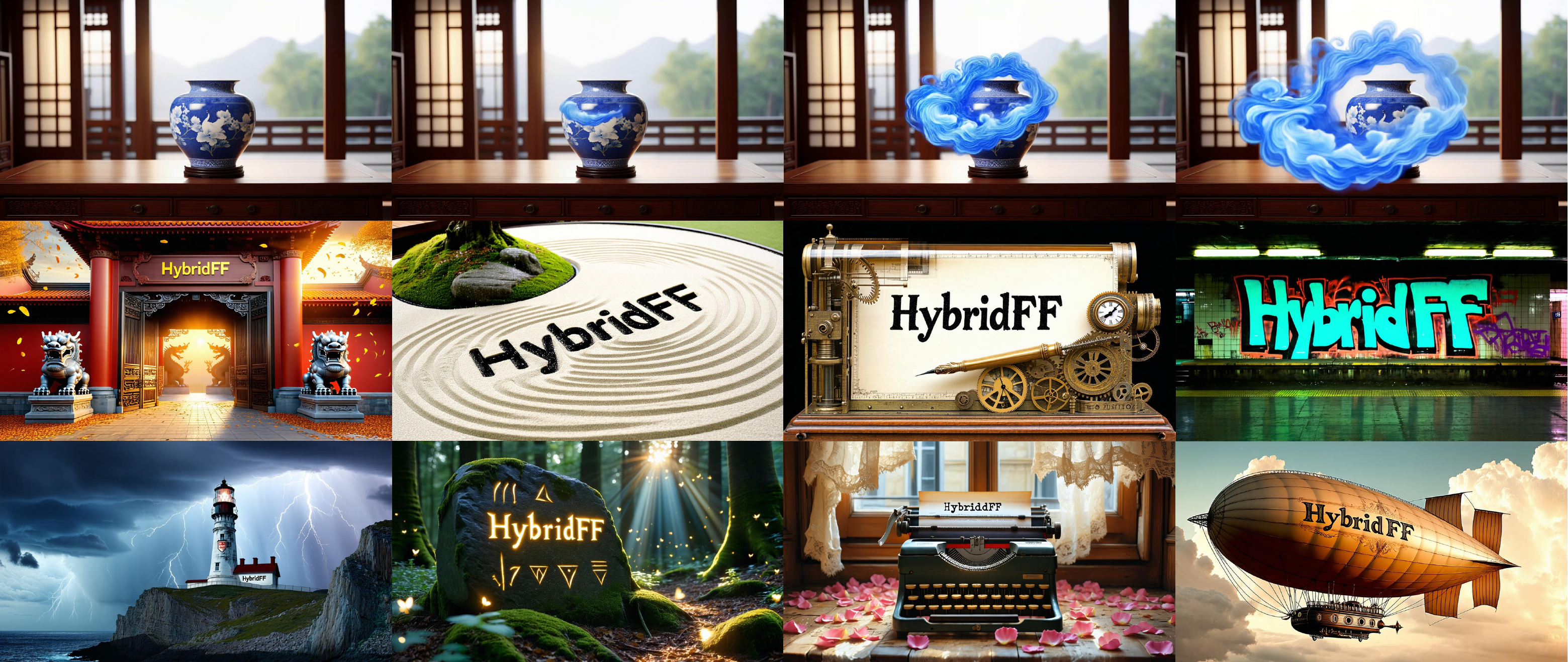}
    \caption{
    HybridFF results: HunyuanVideo (top row, $3.85\times$) and FLUX.1-dev (bottom two rows, $4.37\times$) at NFE\,$=$\,10.
    }
    \label{fig:teaser}
\end{figure*}

Diffusion models~\cite{ho2020ddpm}, particularly those leveraging the Diffusion Transformer (DiT) architecture \cite{peebles2023scalable}, have redefined the state of the art in generative modeling across multiple modalities, including high-fidelity image synthesis \cite{chen2024pixart} and video generation \cite{blattmann2023}. 
However, the iterative nature of the denoising process remains a formidable barrier to efficient deployment. 
Each sampling step involves redundant, computationally expensive evaluations of the denoising network via deep stacks of attention and MLP modules, resulting in prohibitive inference latency \cite{liu2025teacache,zhu2026tap}. 
Consequently, developing efficient sampling acceleration techniques has become an important topic. 

Current acceleration methods can be broadly categorized into sampling step reduction \cite{lu2022dpm,song2023consistency} and per-step computation optimization \cite{li2024faster,fang2023pruning}. 
Among these, feature caching has emerged as a compelling training-free, plug-and-play solution that exploits temporal redundancy between adjacent timesteps \cite{ma2024deepcache,liu2025teacache}. 
This approach has evolved from \emph{static reusing} methods like DeepCache \cite{ma2024deepcache} and TeaCache \cite{liu2025teacache}, which assume feature invariance across steps, to \emph{dynamic forecasting} frameworks exemplified by TaylorSeer \cite{liu2025taylorseer}. 


Consistent with prior observations~\cite{liu2025taylorseer}, Fig.~\ref{fig:mls_prediction}(a) shows that intermediate diffusion features, including transformer attention/MLP outputs, follow structured temporal trends over local cache intervals rather than changing arbitrarily across adjacent denoising steps.

\begin{figure*}[t]
    \centering
    \includegraphics[width=0.95\textwidth]{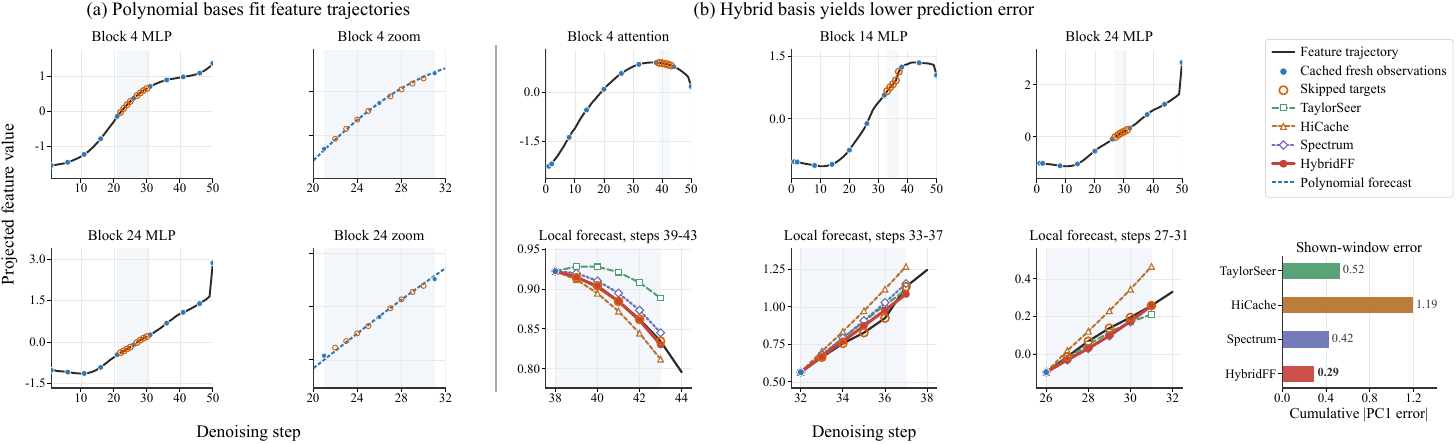}
    \caption{
    DiT-XL/2 feature-trajectory analysis. 
    (a) Intermediate features follow structured trajectories along denoising steps, and full-step observations can be fitted by polynomial bases to forecast skipped-step features. 
    (b) Hybrid-basis polynomial forecasting gives lower feature-prediction error than single-basis predictors, motivating HybridFF's coefficient-level fusion.}
    \label{fig:mls_prediction}
\end{figure*}

Motivated by these predictable structures, many dynamic forecasting methods have been proposed.
%
TaylorSeer \cite{liu2025taylorseer} uses polynomial bases, which often deviate around sharp transitions, 
while HiCache \cite{feng2026hicache} and Spectrum \cite{han2026spectrum} introduce Hermite and Chebyshev polynomial bases, respectively, to improve the stability of feature forecasting under complex or long-range temporal dynamics.
Similarly, TC-Pad\'{e} \cite{cui2026tcpade} introduces rational functions to better capture asymptotic behaviors. 
Despite their success, these forecasting methods typically rely on a \emph{single basis family} or a uniform temporal prior to model feature dynamics.

In practice, diffusion features exhibit highly heterogeneous dynamics across different timesteps, noise levels, network layers, and individual samples \cite{cai2026lesa,zhu2026tap,cui2026tcpade}. 
A single basis inevitably imposes a rigid inductive bias that cannot adapt to the complex, non-monotonic variations inherent in the denoising process. 
This \emph{one-basis-fits-all} dilemma restricts the generation quality at aggressive acceleration ratios, leading to larger feature-prediction errors than hybrid-basis forecasting (Fig.~\ref{fig:mls_prediction}(b)).

To overcome this limitation, we propose \textbf{Hybrid-Basis Feature Forecasting (HybridFF)}, a novel training-free framework that models diffusion feature evolution as a weighted composition of multiple complementary basis families. 
The core insight of HybridFF is that different mathematical bases possess unique strengths, e.g.,
Taylor expansions \cite{liu2025taylorseer} are effective for modeling local smoothness,
Hermite polynomials \cite{feng2026hicache} provide stable interpolation with derivative-aware constraints, and
Chebyshev polynomials \cite{han2026spectrum} offer robust approximation over broader intervals.
To combine these bases into a unified framework that accommodates the multifaceted dynamics of the diffusion model's features across the entire trajectory, we first estimate the coefficients for each basis using moving least squares (MLS) and then apply fusion weights to combine the corresponding predictors.
%
We further introduce two practical weighting strategies to balance speed and quality. 
HybridFF (Fixed) calibrates fusion weights on a small set and keeps them constant during inference, emphasizing the quality--speedup trade-off. 
HybridFF (Adaptive) updates fusion weights based on full-step prediction errors, maintaining high fidelity under aggressive caching.

Extensive evaluations and comparisons with state-of-the-art models (DiT-XL/2, FLUX.1-dev, SD3.5-Large, and HunyuanVideo) show that HybridFF achieves a superior speed-quality trade-off (Tables~\ref{tab:dit_main}--\ref{tab:t2v_main}), outperforming single-basis baselines by a significant margin. For example, HybridFF (Fixed) achieves up to $2.44\times$ speedup on DiT-XL/2, $4.37\times$ on FLUX.1-dev, $4.75\times$ on SD3.5-Large, and $3.85\times$ on HunyuanVideo. Representative image and video results with NFE\,$=$\,10 are shown in Fig.~\ref{fig:teaser}.

%% file: src_combination/relatedwork.tex
\section{Related Works}\label{sec:relatedworks}

\paragraph{Diffusion Models and Generative Applications.}

Diffusion models have become a dominant paradigm for high-quality generative modeling across image synthesis, conditional generation, and video generation \cite{ho2020ddpm,rombach2022ldm,saharia2022imagen,podell2023sdxl,peebles2023scalable,chen2024pixart,esser2024flux,kong2024hunyuanvideo}. 
Among them, diffusion transformers (DiTs) significantly improve scalability and generation quality by replacing convolutional backbones with transformer architectures \cite{peebles2023scalable}. 
These advances have enabled diffusion-based systems to be applied to a broad range of tasks, including photorealistic text-to-image generation \cite{saharia2022imagen,podell2023sdxl,chen2024pixart,esser2024flux}, video generation \cite{kong2024hunyuanvideo,ho2022videodiffusion,singer2022makeavideo}, and restoration tasks such as super-resolution and inpainting \cite{saharia2021srdiff,lugmayr2022repaint}. 
However, despite their strong generation capability, these models remain computationally expensive at inference time due to the iterative denoising process, which motivates efficient diffusion inference methods.

\paragraph{Diffusion Sampling Acceleration.}

Existing methods for accelerating diffusion inference mainly follow two directions. 
The first reduces the number of denoising steps through improved numerical solvers, consistency-style learning, or distillation-based trajectory compression. 
Representative examples include DDIM \cite{song2021ddim}, DPM-Solver \cite{lu2022dpm}, consistency models \cite{song2023consistency}, and progressive distillation \cite{salimans2022progressive}. 
The second reduces the computational cost of each denoising step through model simplification, structural pruning, quantization, token reduction, or architectural acceleration \cite{li2024faster,fang2023pruning,shang2023ptqdiffusion,li2023qdiffusion,bolya2023tokenmerging,yuan2024ditfastattn}. 
Among these approaches, training-free methods that avoid retraining the generative model are particularly attractive, since they can better preserve the original model behavior and are often applicable in a plug-and-play manner.

\paragraph{Feature Caching and Feature Forecasting.}

Feature caching has recently emerged as an effective training-free paradigm for diffusion acceleration. 
Early methods mainly follow a \emph{cache-then-reuse} strategy, where intermediate activations computed at previous timesteps are directly reused to skip redundant computation in later steps \cite{ma2024deepcache,selvaraju2024fora,zou2025toca,liu2025smoothcache,liu2025teacache}. 
While exploiting temporal feature similarity enables low-overhead speedups, direct reuse becomes unreliable at large skip intervals, leading to error accumulation and quality degradation.
%
\begin{figure*}[t]
    \centering
 \begin{overpic}[width=0.88\linewidth]{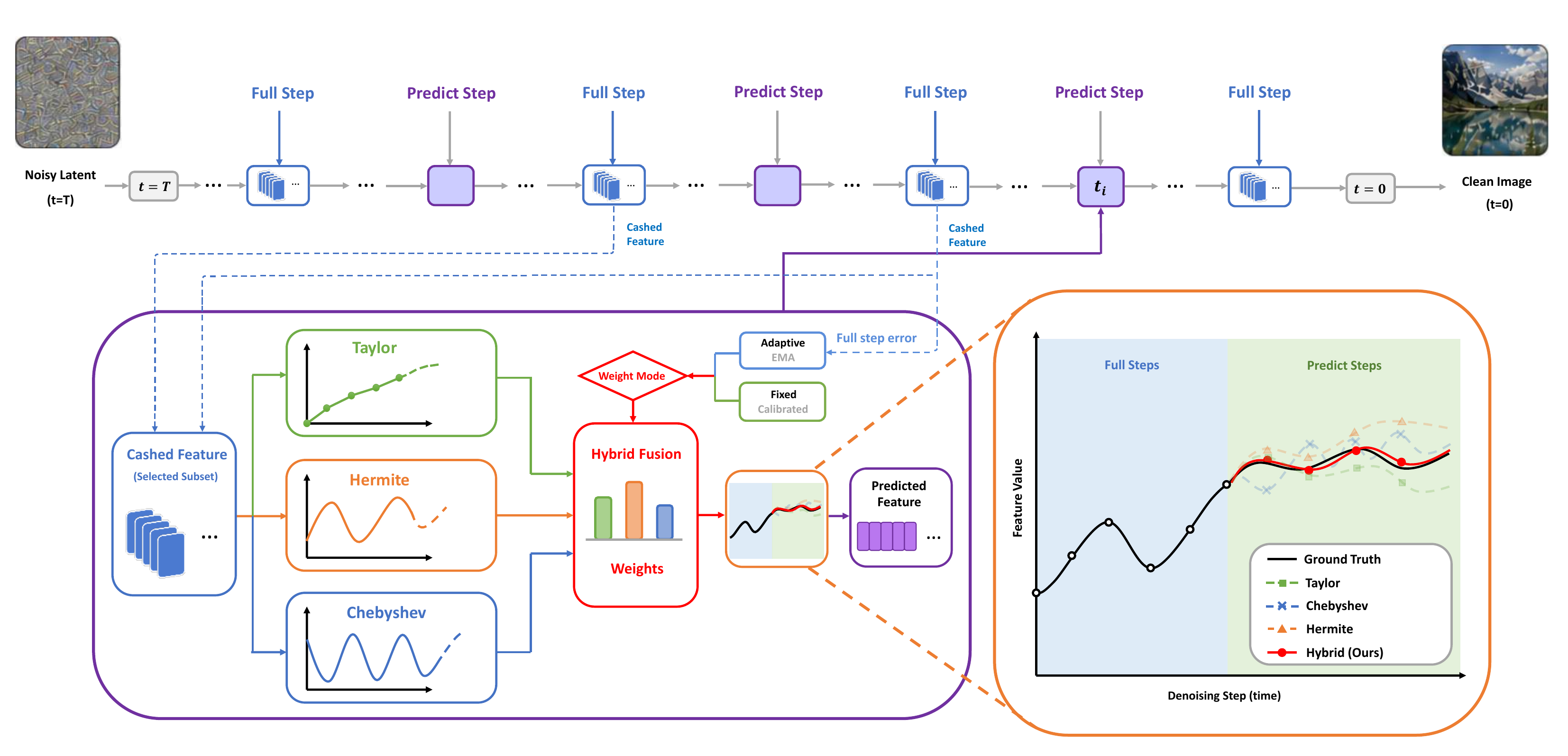}
 \end{overpic}
    \caption{Overview of the proposed hybrid-basis forecasting pipeline. 
    The pipeline proceeds in three stages: 
    (1) full steps compute and cache exact intermediate features; 
    (2) predict steps fit Taylor, Hermite, and Chebyshev predictors from the same normalized history; 
    (3) the branch coefficient vectors are fused by simplex weights and used to reconstruct the de-normalized target block feature.}
    \label{fig:pipeline}
\end{figure*}
Recent works therefore shift to \emph{cache-then-forecast}, predicting future features from historical activations rather than copying them.
TaylorSeer \cite{liu2025taylorseer} frames forecasting as feature extrapolation, establishing it as a stronger alternative to naive reuse. 
Subsequent works improve forecasting via Adams-Bashforth-style extrapolation \cite{yu2025abcache}, plug-in predictor upgrades \cite{zheng2026foca}, spectral or global trajectory modeling \cite{han2026spectrum}, and trajectory-consistent rational approximation \cite{cui2026tcpade}. 
Other extensions further incorporate adaptive forecasting mechanisms \cite{liu2025speca,chen2026svdcache,cai2026lesa,zou2026disca,sun2025unicp}.
Together, they show that forecasting-based acceleration offers better speed--quality trade-offs than direct reuse.

\paragraph{Single-Basis and Hybrid-Basis Forecasting.}

Despite diverse formulations, most forecasting methods still rely on a single prior or basis family, ranging from local extrapolation and global/spectral fitting to adaptive, speculative, subspace-aware, or learnable predictor designs \cite{liu2025taylorseer,han2026spectrum,feng2026hicache,zheng2026foca,cui2026tcpade,cai2026lesa,wang2026zeus,zhu2026tap,liu2025speca,chen2026svdcache,zou2026disca}. 
While effective in their target regimes, a single predictor form imposes a restrictive bias: local predictors may struggle with long-range evolution, whereas global ones may miss local trend changes. 
Since diffusion features exhibit diverse temporal patterns across timesteps, layers, and samples, relying on a single basis family can limit prediction flexibility under aggressive skipping ratios.
%
In contrast, we explore \emph{hybrid-basis forecasting}, which combines multiple complementary basis families. HybridFF therefore captures local behavior, long-range trends, and complex non-monotonic feature evolution during diffusion sampling. 

%% file: src_combination/method.tex
\section{Method} \label{sec:method}


\subsection{Background}\label{sec:method:background}

\paragraph{Diffusion model.} Diffusion models generate data by learning to reverse a noise corruption process. The forward process gradually adds Gaussian noise to a clean sample $\mathbf{x}_0$ drawn from the data distribution $p_{\text{data}}$ over $T$ steps: $q(\mathbf{x}_t \mid \mathbf{x}_0) = \mathcal{N}(\mathbf{x}_t; \sqrt{\bar{\alpha}_t} \mathbf{x}_0, (1 - \bar{\alpha}_t) \mathbf{I})$,
where $\bar{\alpha}_t$ controls the noise schedule and $\mathbf{x}_T \approx \mathcal{N}(0, \mathbf{I})$. The reverse process learns to denoise by training a neural network $\epsilon_\theta$ to predict the added noise:
$\mathcal{L} = \mathbb{E}_{t, \mathbf{x}_0, \epsilon} \left[ \lVert \epsilon - \epsilon_\theta(\mathbf{x}_t, t) \rVert^2 \right].$
At inference, the model iteratively denoises from $\mathbf{x}_T$ to $\mathbf{x}_0$ over $N$ steps via a reverse sampler: $\mathbf{x}_{t_{i-1}} = f(\mathbf{x}_{t_i}, \epsilon_\theta(\mathbf{x}_{t_i}, t_i), t_i, t_{i-1}),$
where $f$ depends on the sampler (e.g., DDPM~\cite{ho2020ddpm}, DDIM~\cite{song2021ddim}, DPM-Solver~\cite{lu2022dpm}).

\paragraph{Problem Formulation.}\label{sec:method:setup}
 Consider a diffusion model with $L$ modules evaluated over $N$ denoising timesteps $\{t_0, t_1, \dots,
  t_{N-1}\}$. At each timestep $t_i$ and module $\ell \in \{1, \dots, L\}$, let $z_{t_i}^\ell \in \mathbb{R}^{d}$ denote an intermediate feature (e.g., attention output in transformer-based models). 
   For brevity, we drop the layer superscript when the context is clear.
   We partition the timesteps into a \emph{full} set $\mathcal{U}$ where the model is evaluated exactly and a \emph{predict} set $\mathcal{V} = \{t_0, \dots, t_{N-1}\} \setminus \mathcal{U}$ where the feature is reconstructed from caches.


For each predict step $t_i \in \mathcal{V}$, the visible history is the ordered index set $\mathcal{I}_{t_i} = \{t_{i,1} < t_{i,2} < \cdots < t_{i,K}\} \subset \mathcal{U}, \quad t_{i,K} < t_i,$
yielding the cached buffer $\mathcal{H}_{t_i} = \big\{(t_{i,j},\, z_{t_{i,j}})\big\}_{j=1}^{K}.$
The forecasting objective is to produce an estimator $\hat{z}_{t_i} = \mathcal{P}(\mathcal{H}_{t_i}, t_i)$ with low reconstruction error, while keeping the predictor lightweight relative to a full model evaluation. 
Following common practice~\cite{liu2025taylorseer,han2026spectrum}, the scheduler that determines $\mathcal{U},\mathcal{V}$ and the predictor $\mathcal{P}$ is decoupled, so different schedules can share the same predictor.

\subsection{Unified Coefficient Forecasting via MLS}
\label{sec:method:mls}

\paragraph{Formulation}
As observed in Fig.~\ref{fig:mls_prediction}(a), local feature trajectories are structured and can be approximated by polynomial bases, reducing skipped-feature forecasting to coefficient estimation from cached observations. At each prediction step $t_i$, HybridFF reconstructs the skipped feature as a linear combination of cached features:
\begin{equation}
\hat{z}_{t_i} = \sum_{j=1}^{K} \alpha_{t_{i,j}}\, z_{t_{i,j}},
\label{eq:linear-combination}
\end{equation}
where $\alpha_{t_i} = (\alpha_{t_{i,1}}, \dots, \alpha_{t_{i,K}})^{\!\top} \in \mathbb{R}^{K}$ is a coefficient vector. 
The key question is \emph{how to determine $\alpha_{t_i}$}. 

We instantiate two solver families: for the Taylor branch,
$\alpha_{t_i}^{(\mathrm{Taylor})}$ is given in closed form by finite-difference
extrapolation weights without any matrix solve \cite{liu2025taylorseer} (see Sec.~\ref{sec:appendix_basis});
for the Chebyshev and Hermite branches, $\alpha_{t_i}^{(b)}$ is obtained via MLS, where given a basis family $b$ with feature map
$\phi^{(b)}$ the MLS solver fits the cached history and returns a
branch-specific coefficient vector. Other solvers are equally
compatible with this interface.
Different basis families capture different aspects of the feature trajectory, so HybridFF maintains multiple complementary branches and \emph{fuses their coefficient vectors} to obtain the final $\alpha_{t_i}$, as summarized in Fig.~\ref{fig:pipeline} and detailed in the hybrid fusion rule below.

\paragraph{MLS Solver.}
Given a basis family $b$ with a feature map $\phi^{(b)}: [0,1] \to \mathbb{R}^{m_b}$,
we model the feature trajectory as $z(\tau) \approx \phi^{(b)}(\tau)^{\!\top} c$ and find the polynomial coefficient $c_{t_i}^{(b)} \in \mathbb{R}^{K}$ that minimizes the weighted fitting error over cached history:
\begin{equation}
c_{t_i}^{(b)} = \arg\min_{c}\; \sum_{j=1}^{K} w_{t_{i,j}}^{(b)} \Big\lVert 
\phi^{(b)}(\tau_{t_i,t_{i,j}})^{\top} c - z_{t_{i,j}} \Big\rVert^2 + 
\lambda_b \lVert c \rVert^2,
\label{eq:mls-objective}
\end{equation}
where $\tau_{t_i,t_{i,j}}$ is a normalized coordinate for history step $t_{i,j}$, $w_{t_{i,j}}^{(b)} \ge 0$ is a locality weight emphasizing recent observations, and $\lambda_b$ is a ridge regularizer. The branch prediction coefficients $\alpha_{t_i}^{(b)}$ in~\eqref{eq:linear-combination} are derived from $c_{t_i}^{(b)}$ by evaluating the basis at the target step. We now define each ingredient.

\paragraph{Coordinate normalization.}
For numerical conditioning, the history and target coordinates are jointly mapped to $[0,1]$. Let $\ell_{t_i} = \min_j t_{i,j}$ and $u_{t_i} = \max(t_i, \max_j t_{i,j})$. Define $\tau_{t_i,t} = \frac{t - \ell_{t_i}}{u_{t_i} - \ell_{t_i} + \varepsilon}, \quad t = \ell_{t_i}, \dots, u_{t_i}$, with $\varepsilon > 0$ a stabilizer.

\paragraph{Locality weights.}
To emphasize recent history, each observation is weighted by a Gaussian decay on the raw step distance: $w_{t_{i,j}}^{(b)} = \exp \left(-\frac{(t_i - 
t_{i,j})^2}{(\sigma^{(b)})^2}\right)$.
The weights use raw coordinates rather than normalized $\tau$ so that the decay scale $\sigma^{(b)}$ remains tied to actual sampling-step distances.

\paragraph{Closed-form solution.} 
Define the basis matrix and weight matrix
\begin{align*}
\Phi_{t_i}^{(b)} &= \begin{bmatrix}
\phi^{(b)}(\tau_{t_i,t_{i,1}})^\top \\
\vdots \\
\phi^{(b)}(\tau_{t_i,t_{i,K}})^\top
\end{bmatrix}, \\
W_{t_i}^{(b)} &=
\mathrm{diag}\big(w_{t_{i,1}}^{(b)},\dots,w_{t_{i,K}}^{(b)}\big).
\end{align*}
Solving~\eqref{eq:mls-objective} and evaluating at the target coordinate $\tau_{t_i,t_i}$ yields the branch coefficient vector over cached features: 
\begin{align}
\alpha_{t_i}^{(b)} &= \big(\phi^{(b)}(\tau_{t_i,t_i})^{\top} \big(G_{t_i}^{(b)}\big)^{-1}\Phi_{t_i}^{(b)\top} W_{t_i}^{(b)}\big)^T, \\
G_{t_i}^{(b)} &:= \Phi_{t_i}^{(b)\top} W_{t_i}^{(b)} \Phi_{t_i}^{(b)} + \lambda_b I,
\label{eq:mls-solution}
\end{align}
The coefficients are then normalized to enforce an affine constraint,
\begin{equation}
\widetilde{\alpha}_{t_i}^{(b)} = \frac{\alpha_{t_i}^{(b)}}{\mathbf{1}^{\!\top}\alpha_{t_i}^{(b)}},
\label{eq:affine-norm}
\end{equation}
so that $\mathbf{1}^{\!\top}\widetilde{\alpha}_{t_i}^{(b)} = 1$. This preserves constants exactly and is required for the anchored residual form.

\paragraph{Basis library.}
The MLS solver in~\eqref{eq:mls-solution} is parameterized by the choice of
basis $\phi^{(b)}$, and any polynomial or orthogonal family can be plugged in.
In our experiments we instantiate three complementary branches: a
\emph{Chebyshev} branch ($\phi^{(\mathrm{Chebyshev})}$) whose orthogonality
yields well-conditioned Gram matrices, suitable for wider history windows and globally
smooth trajectories; a \emph{Hermite} branch ($\phi^{(\mathrm{Hermite})}$) with a
domain-contracting scale that prevents exponential growth of high-order terms
and is responsive to local curvature; and a \emph{Taylor} branch ($\alpha_{t_i}^{(\mathrm{Taylor})}$) whose coefficients are computed in closed form, providing strong short-range extrapolation at no matrix-solve cost.
The explicit recurrences and the Taylor weight formula are given in Sec.~\ref{sec:appendix_basis}.

\subsection{Hybrid Basis Fusion}\label{sec:method:fusion}

\paragraph{Why fuse?}
The three branches are individually optimal in different regimes: 
Taylor excels at short-range extrapolation but degrades when the target step is far from the latest cache;
Chebyshev provides numerically stable global fits but is less responsive to abrupt local change;
Hermite captures mid-range curvature but requires careful scaling.
No single branch dominates across diffusion timesteps and model layers. 
HybridFF therefore combines the branches at the coefficient level.

\paragraph{Coefficient-level fusion.}
Let $\mathcal{B} = \{\text{Chebyshev},\allowbreak \text{Hermite},\allowbreak \text{Taylor}\}$ and let $\pi_{t_i} \in \Delta^{|\mathcal{B}|-1}$ be a simplex weight vector ($\pi_{t_i}^{(b)} \ge 0$, $\sum_b \pi_{t_i}^{(b)} = 1$). The hybrid coefficient vector is
\begin{equation}
\alpha_{t_i} = \sum_{b \in \mathcal{B}} \pi_{t_i}^{(b)} \, \widetilde{\alpha}_{t_i}^{(b)}.
\label{eq:hybrid-hero}
\end{equation}
Because the per-branch coefficients are affine ($\mathbf{1}^{\!\top}\widetilde{\alpha}_{t_i}^{(b)} = 1$) and $\pi_{t_i}$ lies on the simplex, the fused coefficients automatically satisfy $\mathbf{1}^{\!\top}\alpha_{t_i} = 1$.
Let $\hat z_{t_i}^{(b)}$ be the per-branch reconstruction.
By expanding Eq.~\eqref{eq:hybrid-hero}, we recover Eq.~\eqref{eq:linear-combination}:
\begin{equation}
\hat{z}_{t_i}
= \sum_{b \in \mathcal{B}} \pi_{t_i}^{(b)}
\underbrace{\left(\sum_{j=1}^{K} \widetilde{\alpha}_{t_{i,j}}^{(b)}
z_{t_{i,j}}\right)}_{\hat z_{t_i}^{(b)}} 
= \sum_{j=1}^{K}
\underbrace{\left(\sum_{b \in \mathcal{B}} \pi_{t_i}^{(b)}
\widetilde{\alpha}_{t_{i,j}}^{(b)}\right)}_{\alpha_{t_{i,j}}}
z_{t_{i,j}} .
\label{eq:coeff-fusion}
\end{equation}

\paragraph{Anchored Residual Reconstruction}\label{sec:recon}

Let $a(t_i) = t_{i,K}$ be the most recent full step in $\mathcal{H}_{t_i}$ and $z_a := z_{a(t_i)}$ the anchor feature. HybridFF uses the coefficients in residual form:
\begin{equation}
\hat{z}_{t_i} \;=\; z_{a} \;+\; \sum_{j=1}^{K} 
\alpha_{t_{i,j}}\,\big(z_{t_{i,j}} - z_{a}\big).
\label{eq:anchored-recon}
\end{equation}
Because $\mathbf{1}^{\!\top}\alpha_{t_i} = 1$, Eq.~\eqref{eq:anchored-recon} is
algebraically identical to the affine combination
Eq.~\eqref{eq:linear-combination} ($\sum_j \alpha_{t_{i,j}} z_{t_{i,j}}$). The 
two forms differ in numerical behaviour: the residuals $z_{t_{i,j}} - z_a$ have 
substantially smaller magnitude than the raw activations, 
so~\eqref{eq:anchored-recon} better preserves numerical precision and is 
robust to large feature norms commonly observed in deep model layers (e.g., 
transformer blocks).

\subsection{Fixed and Adaptive Fusion Weights}\label{sec:method:adapt}

We report two variants under the same coefficient-fusion pipeline. They differ only in how the simplex weight $\pi_{t_i}$ is determined: HybridFF (Fixed) calibrates $\pi$ once offline and freezes it, whereas HybridFF (Adaptive) updates $\pi$ online from full-step prediction errors.

\paragraph{Fixed weights.}
The fixed weight is calibrated once by a lightweight simplex grid search over a small candidate set, scoring each candidate by SSIM against the full-step reference on a small calibration subset (a few percent of the prompts), and then frozen for all inference steps and prompts. This is a one-time hyperparameter calibration.
Because the weight is a single dimension-free simplex vector shared across steps, HybridFF (Fixed) adds essentially no runtime overhead beyond the coefficient mix itself. Per-backbone calibration details are given in Sec.~\ref{sec:appendix_ablation}.

\paragraph{Adaptive weights.}
A single fixed weight cannot track feature dynamics that vary across samples, layers, and timesteps. HybridFF (Adaptive) keeps the same branch library but recalibrates $\pi$ on the fly, reusing computation that is already available at full steps. At a full step $t_i \in \mathcal{U}$, the exact feature $z_{t_i}$ is computed by the backbone. Before inserting it into the cache, each branch makes a probe prediction $\hat z_{t_i}^{(b)}$ from the existing history using the same branch coefficient rule used in Eq.~\eqref{eq:coeff-fusion}. We measure the relative squared prediction error of branch $b$, $d_{t_i}^{(b)} =
\frac{\lVert \hat z_{t_i}^{(b)} - z_{t_i} \rVert_2^2}
{\lVert z_{t_i} \rVert_2^2 + \varepsilon},$
where $\varepsilon$ is a small stabilizer. Let $p(t_i)$ denote the previous full step carrying an adaptive state. HybridFF maintains an exponential moving average (EMA) error score per branch, $m_{t_i}^{(b)} =
\beta\, m_{p(t_i)}^{(b)} + (1-\beta)\, d_{t_i}^{(b)}, \ 0 \le \beta < 1,$
and converts these scores into adaptive weights by a softmax over negative errors, $\pi_{t_i,\mathrm{ada}}^{(b)} =
\frac{\exp(-m_{t_i}^{(b)}/\tau)}
{\sum_{c\in\mathcal{B}}\exp(-m_{t_i}^{(c)}/\tau)}, \ 
\tau > 0.$
The resulting $\pi_{t_i,\mathrm{ada}}^{(b)}$ is used as $\pi_{t_i}^{(b)}$ in Eq.~\eqref{eq:hybrid-hero} for the subsequent prediction steps until the next full-step update, so a branch with lower recent error contributes more to the fused coefficient vector. Since the exact feature $z_{t_i}$ is already available at full steps, this adds only lightweight branch probes, scalar EMA updates, and a simplex normalization, with no extra denoiser forward pass: it only alters coefficient fusion, leaving the schedule, MLS locality weights, and denoiser unchanged. Empirically, this yields consistent fidelity gains over HybridFF (Fixed) at a marginal speed cost (Tables~\ref{tab:dit_main}--\ref{tab:t2v_main}).

\subsection{Basis Library Details}\label{sec:appendix_basis}

This subsection provides the explicit definitions of the three basis branches introduced above.

\begin{enumerate}
    \item \textbf{Chebyshev branch} ($\phi^{(\mathrm{Chebyshev})}$). First-kind 
    Chebyshev polynomials on $x = (2\tau - 1)\,\gamma_c$, where $\gamma_c$ is 
    the scale parameter:
    \begin{equation}
    \begin{aligned}
    T_0(x) &= 1, \\
    T_1(x) &= x, \\
    T_n(x) &= 2x\,T_{n-1}(x) - T_{n-2}(x).
    \end{aligned}
    \label{eq:chebyshev}
    \end{equation}
    The orthogonality of $T_n$ yields well-conditioned Gram matrices, making this branch suitable for wider history windows and globally smooth trajectories.

    \item \textbf{Hermite branch} ($\phi^{(\mathrm{Hermite})}$). Scaled
    physicist's Hermite polynomials with scale parameter $\gamma > 0$,
    evaluated at $x = 2\tau - 1$:
    \begin{equation}
    \begin{aligned}
    \widetilde{H}_0(x) &= 1, \\
    \widetilde{H}_1(x) &= 2\gamma^2 x, \\
    \widetilde{H}_n(x) &= 2\gamma^2 x\,\widetilde{H}_{n-1}(x) - 2(n\!-\!1)\gamma^2\,\widetilde{H}_{n-2}(x).
    \end{aligned}
    \label{eq:hermite}
    \end{equation}
    This normalization corresponds to $\widetilde{H}_n(x)=\gamma^n H_n(\gamma x)$, where $H_n$ denotes the standard physicist's Hermite polynomial; the scale $\gamma$ contracts the effective domain, preventing exponential growth of high-order terms and making this branch responsive to local curvature.

    \item \textbf{Taylor branch} ($\alpha^{(\mathrm{Taylor})}$). 
    Finite-difference extrapolation weights computed in closed form. Let $r = 
    (t_i - t_{i,K})/(t_{i,K} - t_{i,K-1})$ be the normalised step ratio. The 
    coefficients are
    \begin{equation}
    \alpha_{t_{i,j}}^{(\mathrm{Taylor})} =
    (-1)^{j}\,\frac{r^{\,j}}{j!}\sum_{m=0}^{P-j}\frac{r^{\,m}}{m!},
    \label{eq:taylor}
    \end{equation}
    where $P$ is the extrapolation order. No matrix solve is required; only the two most recent cached features are needed, giving minimal memory cost.
\end{enumerate}

\subsection{Algorithm}\label{sec:appendix_algorithm}

Algorithm~\ref{alg:hybridff} summarizes the full HybridFF inference procedure for one sub-feature.

\begin{algorithm}[t]
  \caption{HybridFF inference for one sub-feature.}
  \label{alg:hybridff}
  \KwIn{schedule $(\mathcal{U}, \mathcal{V})$, history buffer $\mathcal{H}$,
  weights $\pi$}
  \For{$i = 1, \ldots, N$}{
    \eIf{$t_i \in \mathcal{U}$ \tcp*[f]{full step}}{
      compute $z_{t_i}$ exactly\;
      append to $\mathcal{H}$\;
    }{
      select window $K$\;
      compute $\widetilde{\alpha}_{t_i}^{(b)}$ for each $b\in\mathcal{B}$\;
      fuse $\alpha_{t_i}=\sum_b\pi_{t_i}^{(b)}
      \widetilde{\alpha}_{t_i}^{(b)}$\;
      reconstruct $\hat{z}_{t_i}$ via anchored residuals in
      Eq.~\eqref{eq:anchored-recon}\;
    }
  }
\end{algorithm}

\subsection{Scheduling and Complexity Details}\label{sec:appendix_schedule_complexity}

This subsection expands on the scheduling and complexity discussion.

\paragraph{Schedule.}
HybridFF separates the scheduler from the predictor. We support two policies:
(i) \emph{fixed-interval} scheduling with an initial full-step warm-up and an
optional full-step tail, and (ii) \emph{flex-window} scheduling in which the
inter-full-step interval grows monotonically through sampling, mirroring the budget
allocation pattern observed in~\cite{han2026spectrum}. Both policies feed an
ordered $\mathcal{I}_{t_i}$ to the predictor; no part of the predictor depends
on the schedule type.

\paragraph{Complexity.} Per predict step and per layer for DiT, HybridFF performs at
most two $K\times K$ regularized solves (Chebyshev, Hermite) in
$\mathrm{float64}$, one closed-form Taylor weight computation, a
$K$-dimensional simplex mixing, and a $K$-term anchored reconstruction over the
$d$-dimensional sub-feature. The total cost is $O(K^3 + Kd)$ with $K \le 8$ in
all our experiments, dominated by the $O(Kd)$ reconstruction term. For
comparison, a DiT block computes attention ($O(N^2d)$ where $N$ is the token
count) and MLP ($O(Nd^2)$).
Since this computation only involves small history 
windows and low-dimensional coefficient fitting, its runtime overhead is 
negligible compared with a full DiT block evaluation.

\subsection{Theoretical Properties}\label{sec:appendix_theory_properties}

We provide several simple properties of the proposed coefficient-based
hybrid predictor. They clarify why coefficient fusion is well-defined and why
the prediction remains stable under standard conditioning assumptions.

\begin{theorem}[{Stability of MLS Coefficient Prediction}]
	\label{thm:coefficient_stability}
	{Fix a predict step $t_i$. Under standard MLS positive-definiteness
	assumptions on the Gram matrix $G_{t_i}^{(b)}$
	(the MLS solution in Eq.~\eqref{eq:mls-solution}) and a bounded affine-normalization
	denominator $|\mathbf{1}^{\top}\alpha_{t_i}^{(b)}|\ge\gamma_b>0$, the
	hybrid reconstruction $\hat z_{t_i}$ is Lipschitz continuous in the cached
	features $\{z_{t_{i,j}}\}_{j=1}^{K}$ with Lipschitz constant $\lVert\alpha_{t_i}\rVert_1$, which is further bounded by}
	\[
	\lVert\alpha_{t_i}\rVert_1
	\;\le\;
	\sum_{b\in\mathcal{B}}\pi_{t_i}^{(b)}\,\lVert\widetilde\alpha_{t_i}^{(b)}\rVert_1
	.
	\]
\end{theorem}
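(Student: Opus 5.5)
The plan is to first establish that the coefficient vector $\alpha_{t_i}$ is well defined and independent of the cached features; Lipschitz continuity then follows from linearity, and the $\ell_1$ bound from the triangle inequality.

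\emph{Well-definedness.} Under the positive-definiteness assumption, each $G_{t_i}^{(b)}$ is invertible. Adding $\lambda_b I$ with $\lambda_b>0$ already guarantees this, since $\Phi^{\top}W\Phi\succeq 0$ because $W\ge 0$. Hence $\alpha_{t_i}^{(b)}$ in Eq.~\eqref{eq:mls-solution} is a finite vector. The lower bound $|\mathbf{1}^{\top}\alpha_{t_i}^{(b)}|\ge\gamma_b$ makes the affine normalization in Eq.~\eqref{eq:affine-norm} well defined, giving $\lVert\widetilde\alpha_{t_i}^{(b)}\rVert_1\le\lVert\alpha_{t_i}^{(b)}\rVert_1/\gamma_b<\infty$. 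For the Taylor branch, the coefficients in Eq.~\eqref{eq:taylor} are explicit and finite whenever $t_{i,K}\neq t_{i,K-1}$.

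\emph{Independence from the features.} These coefficients depend only on the time coordinates $t_{i,j}$, $t_i$, the locality weights, and the hyperparameters. They do not depend on the values $z_{t_{i,j}}$. Note that the fixed or adaptive weights $\pi_{t_i}$ are frozen at the given predict step, because the adaptive weights are computed at earlier full steps. I will state this explicitly as "for fixed $\pi_{t_i}$". It follows that $\alpha_{t_i}=\sum_b\pi^{(b)}_{t_i}\widetilde\alpha^{(b)}_{t_i}$ is a fixed vector, and by Eq.~\eqref{eq:coeff-fusion} the map $\{z_{t_{i,j}}\}\mapsto\hat z_{t_i}=\sum_j\alpha_{t_{i,j}}z_{t_{i,j}}$ is linear. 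The anchored form in Eq.~\eqref{eq:anchored-recon} gives the same map, because $\mathbf{1}^{\top}\alpha_{t_i}=1$.

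\emph{Lipschitz constant.} For two histories $\{z_{t_{i,j}}\}$ and $\{z'_{t_{i,j}}\}$, the triangle inequality gives
\[
\lVert\hat z_{t_i}-\hat z'_{t_i}\rVert\le\sum_j|\alpha_{t_{i,j}}|\,\lVert z_{t_{i,j}}-z'_{t_{i,j}}\rVert\le\lVert\alpha_{t_i}\rVert_1\max_j\lVert z_{t_{i,j}}-z'_{t_{i,j}}\rVert .
\]
This shows $\hat z_{t_i}$ is Lipschitz with constant $\lVert\alpha_{t_i}\rVert_1$, with respect to the max-over-$j$ norm on the history. I will fix this norm, since the constant depends on that choice.

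\emph{The $\ell_1$ bound.} Applying the triangle inequality to the convex combination, and using $\pi^{(b)}_{t_i}\ge0$, gives $\lVert\alpha_{t_i}\rVert_1\le\sum_b\pi^{(b)}_{t_i}\lVert\widetilde\alpha^{(b)}_{t_i}\rVert_1$.

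The only delicate point is making precise the sense in which the constant is "the" Lipschitz constant. It is a valid constant in the max-norm, and it is sharp when the features are aligned with $\operatorname{sign}(\alpha_{t_{i,j}})$ times a common vector. A secondary point is the adaptive case, where $\pi$ must be treated as fixed for the step.
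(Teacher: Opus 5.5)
Your proposal is correct and follows essentially the same route as the paper: the coefficients do not depend on the cached features, so $\hat z_{t_i}$ is linear in them; the triangle inequality against $\max_j\lVert z_{t_{i,j}}-z'_{t_{i,j}}\rVert_2$ gives the constant $\lVert\alpha_{t_i}\rVert_1$, and the triangle inequality over the simplex weights gives the final bound. The only difference is that the paper makes the per-branch step quantitative, $\lVert\widetilde\alpha_{t_i}^{(b)}\rVert_1\le\frac{\sqrt{K}}{\mu_b\gamma_b}\lVert\phi^{(b)}(\tau_{t_i,t_i})\rVert_2\lVert\Phi_{t_i}^{(b)\top}W_{t_i}^{(b)}\rVert_2$ with $\lambda_{\min}(G_{t_i}^{(b)})\ge\mu_b$, whereas you only record finiteness, which suffices for the statement; your explicit conditioning on $\pi_{t_i}$ matches the paper's implicit ``fixed linear functional'' treatment, and it is a convention rather than a consequence of timing, since the adaptive weights are built from the same cached full-step features.
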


\begin{proof}
\textbf{Step 1: per-branch coefficient norm.}
From the MLS solution in Eq.~\eqref{eq:mls-solution},
\[
\alpha_{t_i}^{(b)} \;=\; \phi^{(b)}(\tau_{t_i,t_i})^{\!\top} \,\bigl(G_{t_i}^{(b)}\bigr)^{-1}\, \Phi_{t_i}^{(b)\top}\, W_{t_i}^{(b)} .
\]
By assumption, $G_{t_i}^{(b)}$ is positive definite with $\lambda_{\min}(G_{t_i}^{(b)})\ge\mu_b>0$, so $\bigl\lVert (G_{t_i}^{(b)})^{-1} \bigr\rVert_2 \le 1/\mu_b$. Using $\lVert v\rVert_1\le\sqrt{K}\,\lVert v\rVert_2$ for any $v \in \mathbb{R}^{K}$ and the sub-multiplicativity of the spectral norm,
\begin{align*}
\bigl\lVert \alpha_{t_i}^{(b)} \bigr\rVert_1
&\le \sqrt{K} \bigl\lVert \alpha_{t_i}^{(b)} \bigr\rVert_2 \\
&\le \frac{\sqrt{K}}{\mu_b}
\bigl\lVert \phi^{(b)}(\tau_{t_i,t_i}) \bigr\rVert_2
\bigl\lVert \Phi_{t_i}^{(b)\top} W_{t_i}^{(b)} \bigr\rVert_2 .
\end{align*}
After affine normalization in Eq.~\eqref{eq:affine-norm} and using $\bigl|\mathbf{1}^{\!\top}\alpha_{t_i}^{(b)}\bigr|\ge\gamma_b>0$,
\begin{equation}
\begin{aligned}
\bigl\lVert \widetilde\alpha_{t_i}^{(b)} \bigr\rVert_1
&= \frac{\bigl\lVert \alpha_{t_i}^{(b)} \bigr\rVert_1}
{\bigl|\mathbf{1}^{\!\top}\alpha_{t_i}^{(b)}\bigr|} \\
&\le \frac{\sqrt{K}}{\mu_b\,\gamma_b}
\bigl\lVert \phi^{(b)}(\tau_{t_i,t_i}) \bigr\rVert_2
\bigl\lVert \Phi_{t_i}^{(b)\top} W_{t_i}^{(b)} \bigr\rVert_2 .
\end{aligned}
\label{eq:appendix_branch_bound}
\end{equation}
\textbf{Step 2: hybrid coefficient norm.}
Since $\pi_{t_i}\in\Delta^{|\mathcal{B}|-1}$, the triangle inequality gives
\begin{equation}
\bigl\lVert \alpha_{t_i} \bigr\rVert_1
=
\Bigl\lVert \textstyle\sum_{b\in\mathcal{B}} \pi_{t_i}^{(b)}\, \widetilde\alpha_{t_i}^{(b)} \Bigr\rVert_1
\le
\sum_{b\in\mathcal{B}} \pi_{t_i}^{(b)}\, \bigl\lVert \widetilde\alpha_{t_i}^{(b)} \bigr\rVert_1 .
\label{eq:appendix_hybrid_bound}
\end{equation}
\textbf{Step 3: Lipschitz continuity in the cached features.}
The hybrid reconstruction $\hat z_{t_i}=\sum_{j=1}^{K} \alpha_{t_{i,j}}\, z_{t_{i,j}}$ is a fixed linear functional of $(z_{t_{i,1}},\dots,z_{t_{i,K}})$. Perturb each cached feature by $\delta_j\in\mathbb{R}^{d}$ to obtain a perturbed history $z_{t_{i,j}}+\delta_j$, with corresponding reconstruction $\hat z_{t_i}+\sum_j\alpha_{t_{i,j}}\,\delta_j$. Therefore
\[
  \begin{aligned}
  \Bigl\lVert \hat z_{t_i}^{\,\mathrm{pert}} - \hat z_{t_i} \Bigr\rVert_2
  &= \Bigl\lVert \textstyle\sum_{j=1}^{K} \alpha_{t_{i,j}}\, \delta_j \Bigr\rVert_2 \\
  &\le \sum_{j=1}^{K} |\alpha_{t_{i,j}}|\, \lVert \delta_j \rVert_2 \\
  &\le \bigl\lVert \alpha_{t_i} \bigr\rVert_1 \cdot \max_{1\le j\le K} \lVert \delta_j \rVert_2 .
  \end{aligned}
  \]
where the last inequality uses $\sum_j |\alpha_{t_{i,j}}|\,\lVert\delta_j\rVert_2 \le (\sum_j |\alpha_{t_{i,j}}|)\cdot\max_j\lVert\delta_j\rVert_2 = \lVert\alpha_{t_i}\rVert_1\cdot\max_j\lVert\delta_j\rVert_2$. Combining with~\eqref{eq:appendix_hybrid_bound} and~\eqref{eq:appendix_branch_bound} yields the Lipschitz constant claimed in Theorem~\ref{thm:coefficient_stability}. 
\end{proof}

\begin{theorem}[{Hybrid Approximation Bound}]
	\label{thm:hybrid_approximation}
	{Let $e_{t_i}^{(b)} = \hat z_{t_i}^{(b)} - z_{t_i}$ denote the
	per-branch prediction error against the true feature $z_{t_i}$, with each
	branch prediction $\hat z_{t_i}^{(b)}$ defined in
	Eq.~\eqref{eq:coeff-fusion}. Then the
	best simplex fusion is no worse than the best single branch:}
	\[
	\min_{\pi_{t_i}\in\Delta^{|\mathcal{B}|-1}}
	\bigl\lVert
	\textstyle\sum_{b\in\mathcal{B}}\pi_{t_i}^{(b)}\hat z_{t_i}^{(b)}
	-z_{t_i}
	\bigr\rVert_2
	\;\le\;
	\min_{b\in\mathcal{B}}
	\lVert\hat z_{t_i}^{(b)}-z_{t_i}\rVert_2 .
	\]
\end{theorem}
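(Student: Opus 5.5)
The plan is to exhibit a feasible point of the simplex that achieves the right-hand side, so that the minimum on the left can only be smaller. The vertices of $\Delta^{|\mathcal{B}|-1}$ are the one-hot vectors $e_{b}$, and at such a vertex the fused prediction is exactly the corresponding single-branch prediction.

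First I would fix a branch $b^\star\in\arg\min_{b\in\mathcal{B}}\lVert\hat z_{t_i}^{(b)}-z_{t_i}\rVert_2$. This minimizer exists because $\mathcal{B}$ is finite, with $|\mathcal{B}|=3$. Next I would set $\pi^\star$ by $\pi^{\star(b)}=1$ if $b=b^\star$ and $\pi^{\star(b)}=0$ otherwise. This vector has nonnegative entries summing to one, so $\pi^\star\in\Delta^{|\mathcal{B}|-1}$. Substituting it gives
\[
\textstyle\sum_{b\in\mathcal{B}}\pi^{\star(b)}\hat z_{t_i}^{(b)}=\hat z_{t_i}^{(b^\star)} .
\]
Therefore the objective at $\pi^\star$ equals $\min_b\lVert\hat z_{t_i}^{(b)}-z_{t_i}\rVert_2$, and the minimum over the simplex is at most this value.

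Two points need care. The first is that the minimum on the left is attained: the objective $\pi\mapsto\lVert\sum_b\pi^{(b)}\hat z_{t_i}^{(b)}-z_{t_i}\rVert_2$ is continuous, and the simplex is compact, so writing $\min$ rather than $\inf$ is justified. The second is consistency with the hybrid predictor of Eq.~\eqref{eq:coeff-fusion}. By that expansion, $\sum_b\pi^{(b)}\hat z_{t_i}^{(b)}$ is exactly the coefficient-fused reconstruction $\sum_j\alpha_{t_{i,j}}z_{t_{i,j}}$, so the statement really concerns the HybridFF output. The anchored form Eq.~\eqref{eq:anchored-recon} gives the same value because $\mathbf{1}^{\top}\alpha_{t_i}=1$.

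I do not expect a genuine obstacle; the argument is a one-line vertex evaluation. Optionally, I would add a remark that convexity of the norm gives
\[
\Bigl\lVert\textstyle\sum_b\pi^{(b)}e_{t_i}^{(b)}\Bigr\rVert_2\le\textstyle\sum_b\pi^{(b)}\lVert e_{t_i}^{(b)}\rVert_2 .
\]
This follows because $\sum_b\pi^{(b)}=1$ makes the fused error equal to $\sum_b\pi^{(b)}e_{t_i}^{(b)}$. The remark also explains why strict improvement is possible when branch errors partially cancel.
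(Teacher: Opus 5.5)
Your proof is correct and matches the paper's: both evaluate the objective at the vertex $\bar\pi$ concentrated on $b^\star\in\arg\min_{b}\lVert e_{t_i}^{(b)}\rVert_2$ and conclude that the simplex minimum is no larger, and the convex-combination bound you give as an optional remark is the paper's Step~1. Your justification that the minimum is attained (a continuous objective on the compact simplex) is a small point the paper leaves implicit.
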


\begin{proof}
\textbf{Step 1: linearity of the prediction error.}
By construction, $\hat z_{t_i} = \sum_{b\in\mathcal{B}} \pi_{t_i}^{(b)}\, \hat z_{t_i}^{(b)}$ and $\sum_{b\in\mathcal{B}} \pi_{t_i}^{(b)} = 1$. Subtracting $z_{t_i}$,
 \[
  \begin{aligned}
  \hat z_{t_i} - z_{t_i}
  &= \sum_{b\in\mathcal{B}} \pi_{t_i}^{(b)}\, \hat z_{t_i}^{(b)} - z_{t_i} \\
  &= \sum_{b\in\mathcal{B}} \pi_{t_i}^{(b)}\, \bigl( \hat z_{t_i}^{(b)} - z_{t_i} \bigr) \\
  &= \sum_{b\in\mathcal{B}} \pi_{t_i}^{(b)}\, e_{t_i}^{(b)} .
  \end{aligned}
  \]
The triangle inequality then gives
\begin{equation}
\bigl\lVert \hat z_{t_i} - z_{t_i} \bigr\rVert_2
\le
\sum_{b\in\mathcal{B}} \pi_{t_i}^{(b)}\, \bigl\lVert e_{t_i}^{(b)} \bigr\rVert_2 .
\label{eq:appendix_convex_combination}
\end{equation}
\textbf{Step 2: optimal simplex fusion dominates the best single branch.}
Let $b^{\star} \in \arg\min_{b\in\mathcal{B}} \lVert e_{t_i}^{(b)}\rVert_2$, and consider the corner $\bar{\pi}\in\Delta^{|\mathcal{B}|-1}$ defined by $\bar{\pi}^{(b^{\star})} = 1$ and $\bar{\pi}^{(b)} = 0$ for $b \neq b^{\star}$. Substituting $\bar{\pi}$ into the left-hand side of~\eqref{eq:appendix_convex_combination},
\begin{align*}
&\Bigl\lVert \textstyle\sum_{b\in\mathcal{B}} \bar{\pi}^{(b)}\,
\hat z_{t_i}^{(b)} - z_{t_i} \Bigr\rVert_2 \\
&\qquad= \bigl\lVert \hat z_{t_i}^{(b^{\star})} - z_{t_i} \bigr\rVert_2
= \min_{b\in\mathcal{B}}
\bigl\lVert \hat z_{t_i}^{(b)} - z_{t_i} \bigr\rVert_2 .
\end{align*}
Since the minimum over $\Delta^{|\mathcal{B}|-1}$ is no larger than the value at any particular feasible point $\bar{\pi}$,
\[
\min_{\pi_{t_i}\in\Delta^{|\mathcal{B}|-1}}
\Bigl\lVert \textstyle\sum_{b\in\mathcal{B}} \pi_{t_i}^{(b)}\, \hat z_{t_i}^{(b)} - z_{t_i} \Bigr\rVert_2
\le
\min_{b\in\mathcal{B}} \bigl\lVert \hat z_{t_i}^{(b)} - z_{t_i} \bigr\rVert_2 ,
\]
which is the statement of Theorem~\ref{thm:hybrid_approximation}.

\end{proof}

\begin{remark}
    Inequality~\eqref{eq:appendix_convex_combination} is tight only when all branches share the same error direction (in which case the upper bound is attained by every $\pi$). In practice the per-branch errors $e_{t_i}^{(b)}$ point in different directions, and partial cancellation yields strictly smaller hybrid error than the convex bound, which empirically corroborates the observations in the main experiments.
    
	Theorem~\ref{thm:hybrid_approximation} explains the role of the hybrid
	basis bank. Taylor, Hermite, and Chebyshev branches induce different
	coefficient operators and therefore different local extrapolation errors.
	Simplex fusion allows the predictor to inherit the favorable branch
	behavior at each step, while Theorem~\ref{thm:coefficient_stability} shows
	that this fusion does not introduce additional instability beyond the
	weighted coefficient norms of the active branches.
\end{remark}

%% file: src_combination/results.tex
\section{Experiments}
\label{sec:experiments}

\subsection{Experimental Setup}\label{sec:appendix_setup}

\paragraph{Models and protocols.}
We evaluate HybridFF on class-conditional image generation with DiT-XL/2~\cite{peebles2023scalable}, text-to-image generation with FLUX.1-dev~\cite{esser2024flux} and SD3.5-Large~\cite{esser2024}, and text-to-video generation with HunyuanVideo. We use the standard ImageNet protocol for DiT-XL/2, DrawBench prompts for FLUX.1-dev and SD3.5-Large, and VBench prompts for HunyuanVideo. The 50-step sampler serves as the quality anchor on every benchmark, and all experiments are conducted on NVIDIA RTX PRO 6000 GPUs.

\paragraph{Baselines and hyperparameters.}
We compare against three representative forecasting baselines: TaylorSeer~\cite{liu2025taylorseer} (Taylor extrapolation), HiCache~\cite{feng2026hicache} (Hermite forecasting), and Spectrum~\cite{han2026spectrum} (Chebyshev forecasting). Unless otherwise specified, all forecasting baselines use order $O=4$ or degree $D=4$, and HybridFF additionally sets the history window to $K=5$.

\paragraph{Metrics.}
We report both efficiency metrics (wall-clock latency and speedup w.r.t.\ the 50-step reference) and quality metrics (SSIM, PSNR, LPIPS w.r.t.\ the 50-step output; FID, IS, sFID, Recall, and Precision for class-conditional generation; CLIP-Score and ImageReward for text-to-image; and VBench quality for text-to-video).

\paragraph{Note on latency.}
HybridFF is consistently slightly faster than Spectrum across all benchmarks even though it evaluates three branches. Spectrum performs its Chebyshev fit over the \emph{entire} cached history, whereas HybridFF restricts every MLS branch to a short window ($K\!=\!5$ in all experiments) and uses a closed-form Taylor branch with no matrix solve. The cubic $O(K^3)$ MLS solve is therefore much smaller, and this gain outweighs the cost of the extra branches and the simplex mix.

\subsection{Class-Conditional Image Generation}

We first evaluate on DiT-XL/2 with fixed HybridFF weights $\pi=(0.9, 0.1, 0.0)$ and ridge regularizer $\lambda_b=10^{-4}$. Table~\ref{tab:dit_main} reports results across three acceleration regimes.
\begin{table*}[!ht]
\centering
\setlength{\tabcolsep}{2pt}
\renewcommand{\arraystretch}{1.08}
\begin{tabular}{clcccccccccc}
\toprule
NFE & Method & SSIM$\uparrow$ & PSNR$\uparrow$ & LPIPS$\downarrow$ & FID$\downarrow$ & IS$\uparrow$ & sFID$\downarrow$ & Recall$\uparrow$ & Precision$\uparrow$ & Latency (s)$\downarrow$ & Speedup$\uparrow$ \\
\midrule
50 & Baseline (50-step) & \resultsNA & \resultsNA & \resultsNA & 2.276 & 240.855 & 4.258 & 0.592 & 0.802 & 0.971 & 1.00$\times$ \\
\midrule
14 & TaylorSeer & 0.927 & 30.710 & 0.042 & 2.528 & 235.309 & 5.121 & 0.589 & \textbf{0.803} & 0.568 & 1.71$\times$ \\
14 & HiCache    & 0.927 & 30.811 & 0.041 & 2.518 & 235.894 & 5.165 & 0.592 & \underline{0.801} & 0.531 & 1.83$\times$ \\
14 & Spectrum   & 0.914 & 29.233 & 0.052 & 2.604 & 230.968 & \textbf{4.577} & \textbf{0.596} & 0.794 & 0.542 & 1.79$\times$ \\
14 & \textbf{HybridFF (Fixed)} & \underline{0.942} & \underline{31.982} & \underline{0.032} & \underline{2.467} & \underline{236.325} & 4.838 & \underline{0.593} & 0.800 & \textbf{0.520} & \textbf{1.87$\times$} \\
14 & \textbf{HybridFF (Adaptive)} & \textbf{0.945} & \textbf{32.053} & \textbf{0.031} & \textbf{2.416} & \textbf{236.801} & \underline{4.762} & 0.591 & \underline{0.801} & \underline{0.527} & \underline{1.84$\times$} \\
\midrule
10 & TaylorSeer & 0.825 & 25.460 & 0.106 & 3.113 & 225.260 & 6.312 & 0.579 & \underline{0.788} & 0.481 & 2.02$\times$ \\
10 & HiCache    & 0.830 & 25.543 & 0.100 & 3.087 & 225.682 & 6.835 & 0.583 & \textbf{0.789} & 0.447 & 2.17$\times$ \\
10 & Spectrum   & 0.821 & 24.881 & 0.115 & 3.313 & 219.220 & \textbf{4.758} & \underline{0.597} & 0.774 & 0.454 & 2.14$\times$ \\
10 & \textbf{HybridFF (Fixed)} & \underline{0.843} & \underline{25.926} & \underline{0.092} & \underline{2.955} & \underline{226.384} & 5.489 & \textbf{0.598} & 0.784 & \textbf{0.437} & \textbf{2.22$\times$} \\
10 & \textbf{HybridFF (Adaptive)} & \textbf{0.848} & \textbf{26.114} & \textbf{0.091} & \textbf{2.912} & \textbf{226.573} & \underline{4.924} & 0.594 & 0.783 & \underline{0.445} & \underline{2.18$\times$} \\
\midrule
 8 & TaylorSeer & 0.721 & 22.339 & 0.181 & 4.465 & 204.265 & 7.322 & 0.568 & \textbf{0.765} & 0.432 & 2.25$\times$ \\
 8 & HiCache    & 0.722 & 22.319 & 0.171 & 4.324 & 204.442 & 7.206 & 0.569 & \underline{0.762} & 0.408 & 2.38$\times$ \\
 8 & Spectrum   & 0.653 & 20.835 & 0.253 & 7.254 & 181.102 & 7.980 & 0.574 & 0.711 & 0.424 & 2.29$\times$ \\
 8 & \textbf{HybridFF (Fixed)} & \underline{0.735} & \underline{22.483} & \underline{0.165} & \underline{4.275} & \underline{205.619} & \underline{6.001} & \textbf{0.598} & 0.754 & \textbf{0.398} & \textbf{2.44$\times$} \\
 8 & \textbf{HybridFF (Adaptive)} & \textbf{0.756} & \textbf{23.251} & \textbf{0.156} & \textbf{4.263} & \textbf{206.474} & \textbf{4.924} & \underline{0.596} & 0.741 & \underline{0.403} & \underline{2.41$\times$} \\
\bottomrule
\end{tabular}
\caption{Comparison on class-conditional image generation with DiT-XL/2.
Bold and underline mark the best and second-best entries in each block, respectively.}\label{tab:dit_main}
\end{table*}
%
HybridFF (Fixed) is the fastest HybridFF variant in each block, reaching up to $2.44\times$ speedup at $\text{NFE}\!=\!8$.
HybridFF (Adaptive), which attains the best SSIM, PSNR, LPIPS, FID, and IS at every budget, gives the strongest fidelity: it improves over Fixed on these primary fidelity metrics at all three cache budgets, and at the aggressive $\text{NFE}\!=\!8$ setting it raises SSIM from $0.735$ to $0.756$, PSNR from $22.483$ to $23.251$, reduces LPIPS from $0.165$ to $0.156$, and lowers FID from $4.275$ to $4.263$, while still keeping a $2.41\times$ speedup.
Fig.~\ref{fig:dit_qual} confirms HybridFF stays visually closest to the 50-step reference.


\begin{figure}[!ht]
\centering
\includegraphics[width=0.90\linewidth]{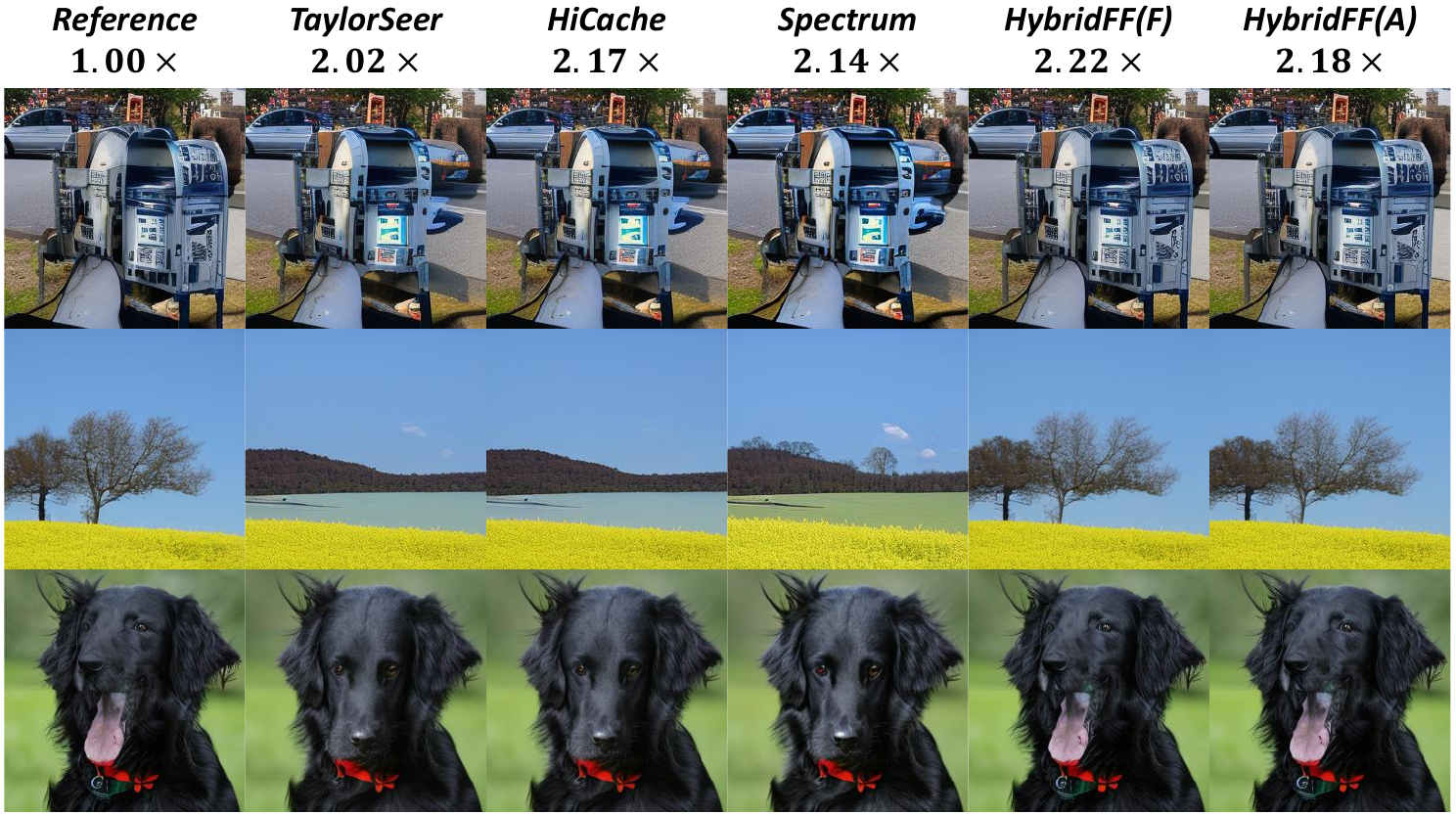}
\caption{
Comparison on class-conditional generation with DiT-XL/2 at NFE\,$=$\,10.
F/A denote fixed/adaptive fusion weights.
}
\label{fig:dit_qual}
\end{figure}

\begin{figure}[t]
\centering
\includegraphics[width=0.94\linewidth]{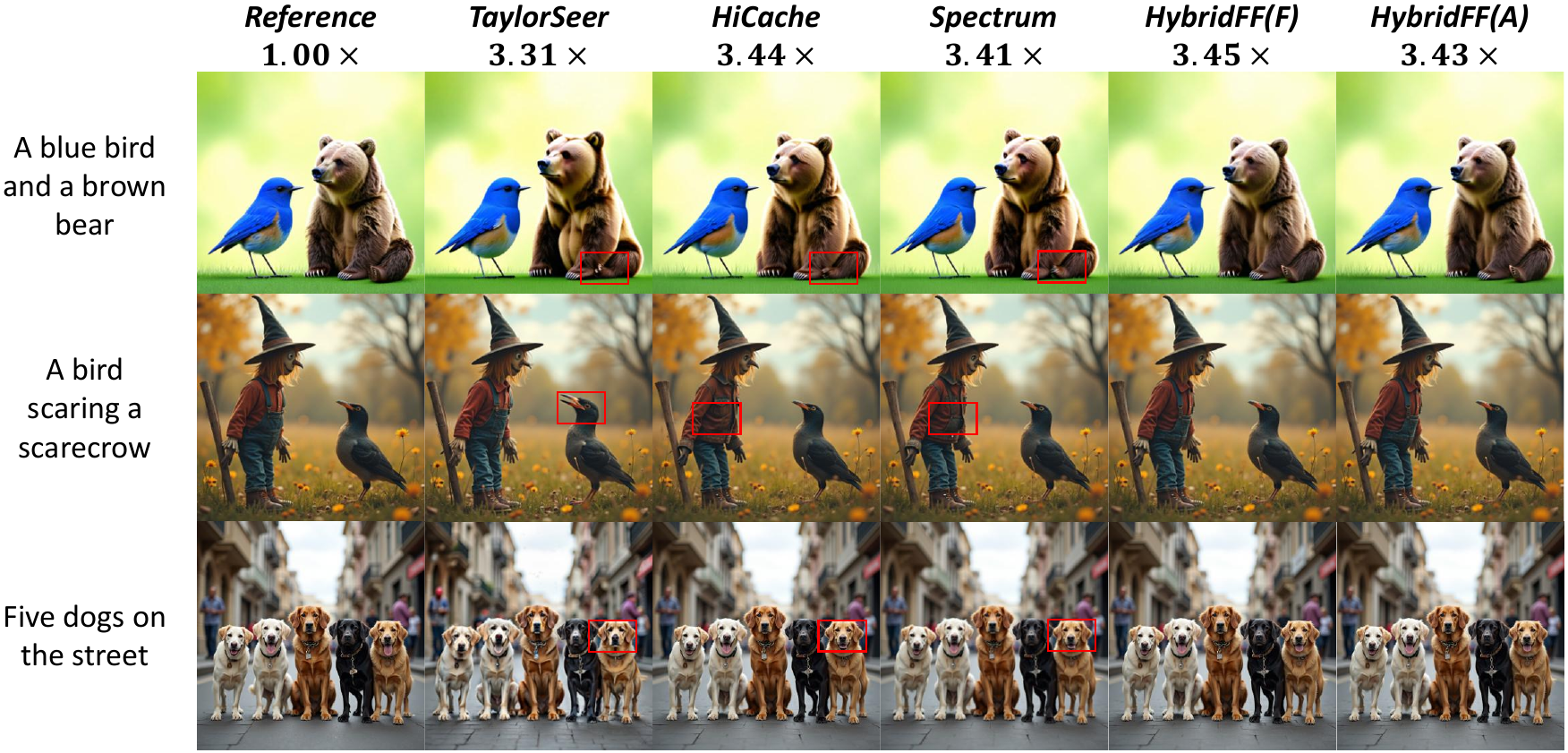}
\caption{
Comparison on text-to-image generation with SD3.5-Large at NFE\,$=$\,14.
F/A denote fixed/adaptive fusion weights.
}
\label{fig:sd3_qual}
\end{figure}

\subsection{Text-to-Image Generation}

We evaluate on FLUX.1-dev with fixed fusion weights $\pi=(0.6, 0.1, 0.3)$ and $\lambda_b=10^{-4}$, and on SD3.5-Large with $\pi=(0.1, 0.9, 0.0)$ and $\lambda_b=10^{-3}$. 
\begin{table*}[!ht]
\centering
\setlength{\tabcolsep}{6pt}
\renewcommand{\arraystretch}{1.08}
\textbf{FLUX.1-dev}\\[3pt]
\begin{tabular}{clcccccc}
\toprule
NFE & Method & SSIM$\uparrow$ & PSNR$\uparrow$ & LPIPS$\downarrow$ & IR$\uparrow$ & CS$\uparrow$ & Spd$\uparrow$ \\
\midrule
50 & Baseline (50-step) & \resultsNA & \resultsNA & \resultsNA & 0.993 & 27.504 & 1.00$\times$ \\
\midrule
14 & TaylorSeer & 0.883 & 25.204 & 0.108 & \textbf{1.017} & 27.462 & 3.13$\times$ \\
14 & HiCache & 0.893 & 26.209 & 0.098 & \underline{1.000} & 27.465 & \underline{3.24$\times$} \\
14 & Spectrum & 0.895 & 26.375 & 0.095 & 0.997 & 27.456 & 3.22$\times$ \\
14 & \textbf{HybridFF (Fixed)} & \underline{0.897} & \underline{26.411} & \underline{0.092} & 0.997 & \textbf{27.493} & \textbf{3.25$\times$} \\
14 & \textbf{HybridFF (Adaptive)} & \textbf{0.902} & \textbf{26.993} & \textbf{0.088} & 0.999 & \underline{27.492} & 3.23$\times$ \\
\midrule
10 & TaylorSeer & 0.749 & 20.133 & 0.260 & 0.962 & 27.472 & 4.31$\times$ \\
10 & HiCache & 0.820 & 22.975 & 0.186 & 0.982 & 27.495 & \underline{4.36$\times$} \\
10 & Spectrum & 0.821 & 23.031 & 0.184 & 0.983 & 27.535 & 4.33$\times$ \\
10 & \textbf{HybridFF (Fixed)} & \underline{0.822} & \underline{23.086} & \underline{0.175} & \textbf{0.985} & \underline{27.551} & \textbf{4.37$\times$} \\
10 & \textbf{HybridFF (Adaptive)} & \textbf{0.824} & \textbf{23.327} & \textbf{0.171} & \underline{0.984} & \textbf{27.556} & 4.35$\times$ \\
\bottomrule
\end{tabular}

\medskip
\textbf{SD3.5-Large}\\[3pt]
\begin{tabular}{clcccccc}
\toprule
NFE & Method & SSIM$\uparrow$ & PSNR$\uparrow$ & LPIPS$\downarrow$ & IR$\uparrow$ & CS$\uparrow$ & Spd$\uparrow$ \\
\midrule
50 & Baseline (50-step) & \resultsNA & \resultsNA & \resultsNA & 1.003 & 28.533 & 1.00$\times$ \\
\midrule
14 & TaylorSeer & 0.722 & 16.888 & 0.271 & \underline{0.986} & 28.480 & 3.31$\times$ \\
14 & HiCache & 0.755 & 17.996 & \underline{0.240} & 0.967 & 28.464 & \underline{3.44$\times$} \\
14 & Spectrum & 0.749 & 17.819 & 0.244 & \textbf{0.997} & 28.487 & 3.41$\times$ \\
14 & \textbf{HybridFF (Fixed)} & \underline{0.765} & \underline{18.550} & \textbf{0.236} & 0.965 & \underline{28.574} & \textbf{3.45$\times$} \\
14 & \textbf{HybridFF (Adaptive)} & \textbf{0.774} & \textbf{18.755} & 0.242 & 0.981 & \textbf{28.628} & 3.43$\times$ \\
\midrule
10 & TaylorSeer & 0.580 & 13.346 & 0.432 & 0.667 & 27.747 & 4.56$\times$ \\
10 & HiCache & 0.657 & 15.639 & 0.353 & 0.844 & 28.124 & \underline{4.74$\times$} \\
10 & Spectrum & 0.655 & 15.529 & 0.356 & 0.843 & 28.125 & 4.71$\times$ \\
10 & \textbf{HybridFF (Fixed)} & \underline{0.676} & \underline{16.342} & \underline{0.339} & \underline{0.889} & \underline{28.173} & \textbf{4.75$\times$} \\
10 & \textbf{HybridFF (Adaptive)} & \textbf{0.691} & \textbf{16.772} & \textbf{0.314} & \textbf{0.908} & \textbf{28.280} & 4.72$\times$ \\
\bottomrule
\end{tabular}
\caption{Quantitative comparison on text-to-image generation with FLUX.1-dev and SD3.5-Large. Bold and underline mark the best and second-best entries in each block, respectively. IR\,=\,ImageReward, CS\,=\,CLIP-Score, Spd\,=\,Speedup.}\label{tab:t2i_combined}
\end{table*}
%
HybridFF (Fixed) is the most efficient HybridFF variant on both text-to-image backbones, reaching up to $4.37\times$ speedup on FLUX.1-dev and $4.75\times$ on SD3.5-Large.
HybridFF (Adaptive) gives stronger reconstruction fidelity under aggressive caching: at $\text{NFE}\!=\!10$, it improves FLUX.1-dev PSNR from $23.086$ to $23.327$ and LPIPS from $0.175$ to $0.171$, while on SD3.5-Large it raises SSIM from $0.676$ to $0.691$, PSNR from $16.342$ to $16.772$, and reduces LPIPS from $0.339$ to $0.314$, with only a small speedup reduction from $4.75\times$ to $4.72\times$.
Fig.~\ref{fig:sd3_qual} confirms the qualitative gains.

\begin{table*}[t]
\centering
\setlength{\tabcolsep}{4.5pt}
\renewcommand{\arraystretch}{1.08}
\begin{tabular}{clcccccc}
\toprule
NFE & Method & SSIM$\uparrow$ & PSNR$\uparrow$ & LPIPS$\downarrow$ & VBench Quality$\uparrow$ & Latency (s)$\downarrow$ & Speedup$\uparrow$ \\
\midrule
50 & Baseline (50-step) & \resultsNA & \resultsNA & \resultsNA & 80.958 & 351.227 & 1.00$\times$ \\
\midrule
14 & TaylorSeer & 0.794 & 24.568 & 0.182 & 79.576 & 140.757 & 2.50$\times$ \\
14 & HiCache    & 0.826 & 24.984 & 0.165 & 80.292 & 138.094 & 2.54$\times$ \\
14 & Spectrum   & 0.834 & 26.237 & 0.140 & 80.259 & 128.548 & 2.73$\times$ \\
14 & \textbf{HybridFF (Fixed)} & \underline{0.853} & \underline{26.400} & \underline{0.124} & \underline{80.343} & \textbf{123.702} & \textbf{2.84$\times$} \\
14 & \textbf{HybridFF (Adaptive)} & \textbf{0.867} & \textbf{26.809} & \textbf{0.113} & \textbf{80.614} & \underline{126.076} & \underline{2.79$\times$} \\
\midrule
10 & TaylorSeer & 0.754 & 20.715 & 0.248 & 79.065 & 103.409 & 3.40$\times$ \\
10 & HiCache    & 0.775 & 23.046 & 0.201 & 79.946 & 100.589 & 3.50$\times$ \\
10 & Spectrum   & 0.788 & 24.473 & 0.189 & 79.895 & 95.223 & 3.69$\times$ \\
10 & \textbf{HybridFF (Fixed)} & \underline{0.801} & \underline{24.875} & \underline{0.177} & \underline{80.041} & \textbf{91.132} & \textbf{3.85$\times$} \\
10 & \textbf{HybridFF (Adaptive)} & \textbf{0.819} & \textbf{24.942} & \textbf{0.171} & \textbf{80.127} & \underline{94.681} & \underline{3.71$\times$} \\
\bottomrule
\end{tabular}
\caption{
Comparison on text-to-video generation with HunyuanVideo. 
Bold and underline mark the best and second-best entries in each block, respectively.
}\label{tab:t2v_main}
\end{table*}

\begin{figure}[t]
\centering
\includegraphics[width=0.99\linewidth]{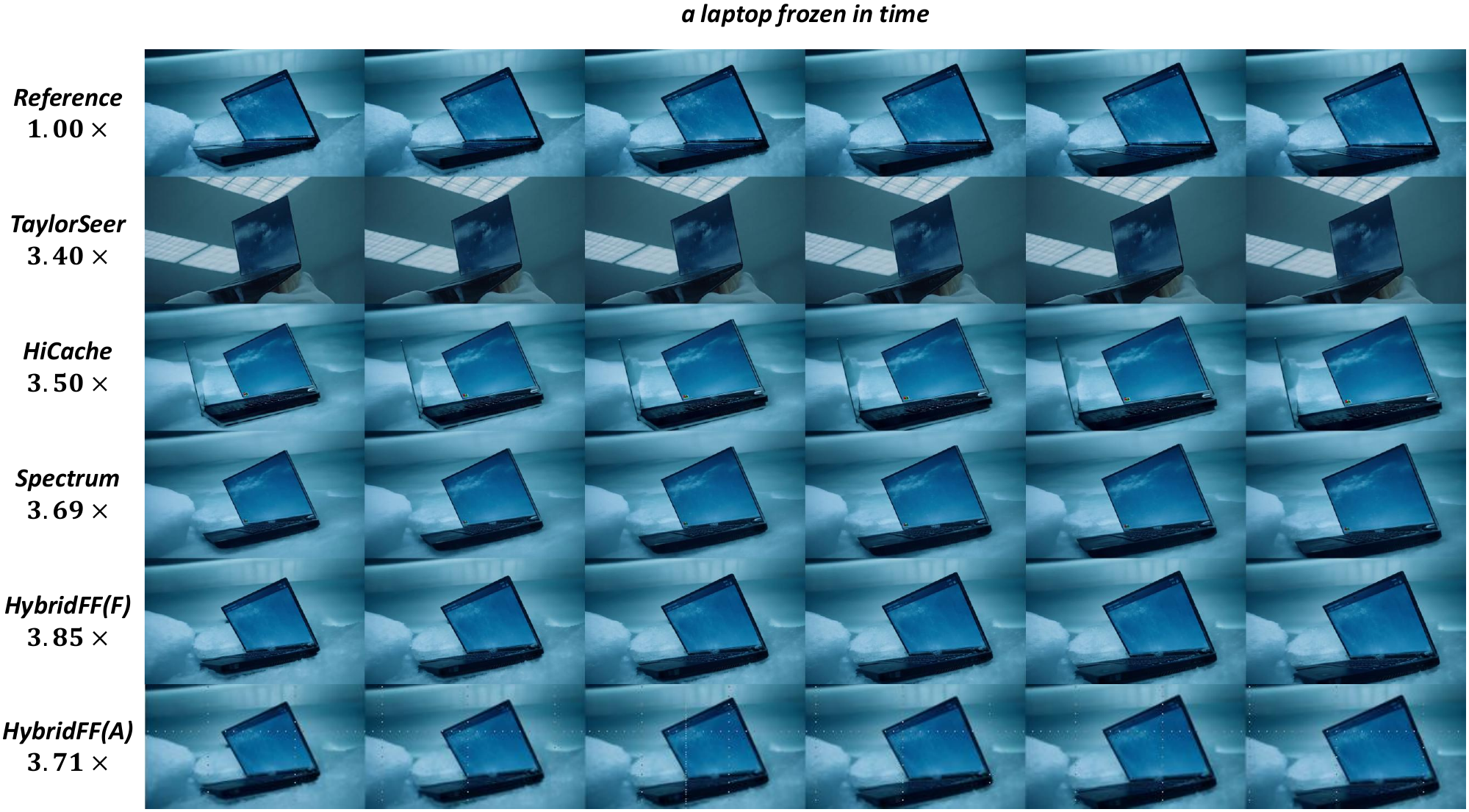}
\caption{
Comparison on text-to-video generation with HunyuanVideo at NFE\,$=$\,10.
F/A denote fixed/adaptive fusion weights.
}
\label{fig:video_qual_layout}
\end{figure}

\pagebreak[3]
\subsection{Text-to-Video Generation}

For HunyuanVideo we use fixed HybridFF weights $\pi=(0.7, 0.2, 0.1)$ and $K=5$. Each method generates $946 \times 4 = 3{,}784$ clips on the VBench prompts.
HybridFF (Fixed) is the fastest HybridFF variant in each block, cutting per-clip latency from $351.2$\,s to $91.1$\,s and reaching $3.85\times$ speedup at $\text{NFE}\!=\!10$.
HybridFF (Adaptive) gives the strongest video fidelity: at $\text{NFE}\!=\!14$ it raises SSIM from $0.853$ to $0.867$, PSNR from $26.400$ to $26.809$, and VBench Quality from $80.343$ to $80.614$ over Fixed; at $\text{NFE}\!=\!10$ it further improves SSIM to $0.819$ and LPIPS to $0.171$, while still keeping a $3.71\times$ speedup.
Fig.~\ref{fig:video_qual_layout} shows that HybridFF stays closest to the 50-step reference across frames.

\begin{figure}[h]
\centering
\includegraphics[width=0.92\linewidth]{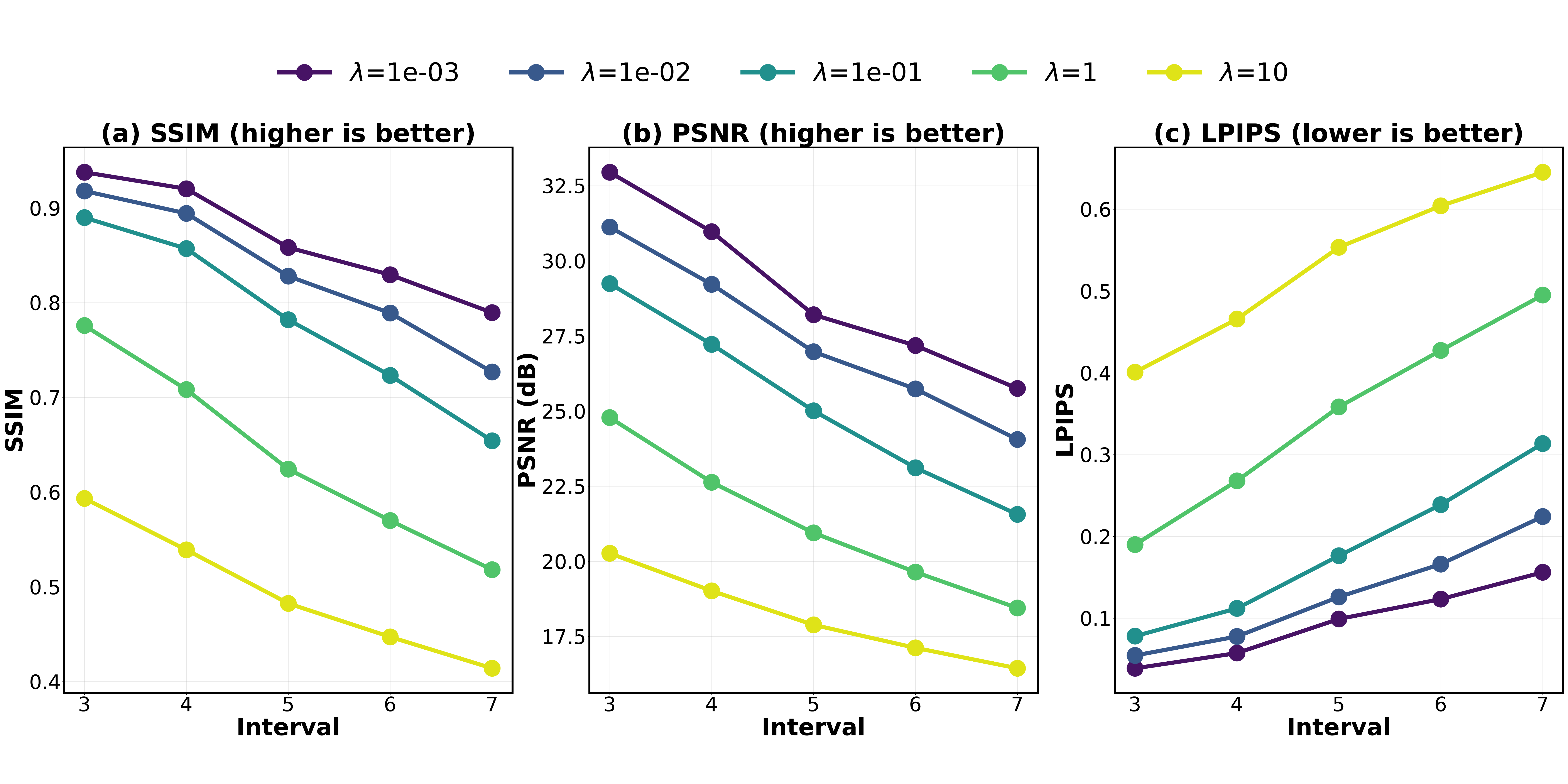}\\
{\small (a) Ridge regularizer $\lambda$.}\\[3pt]
\includegraphics[width=0.92\linewidth]{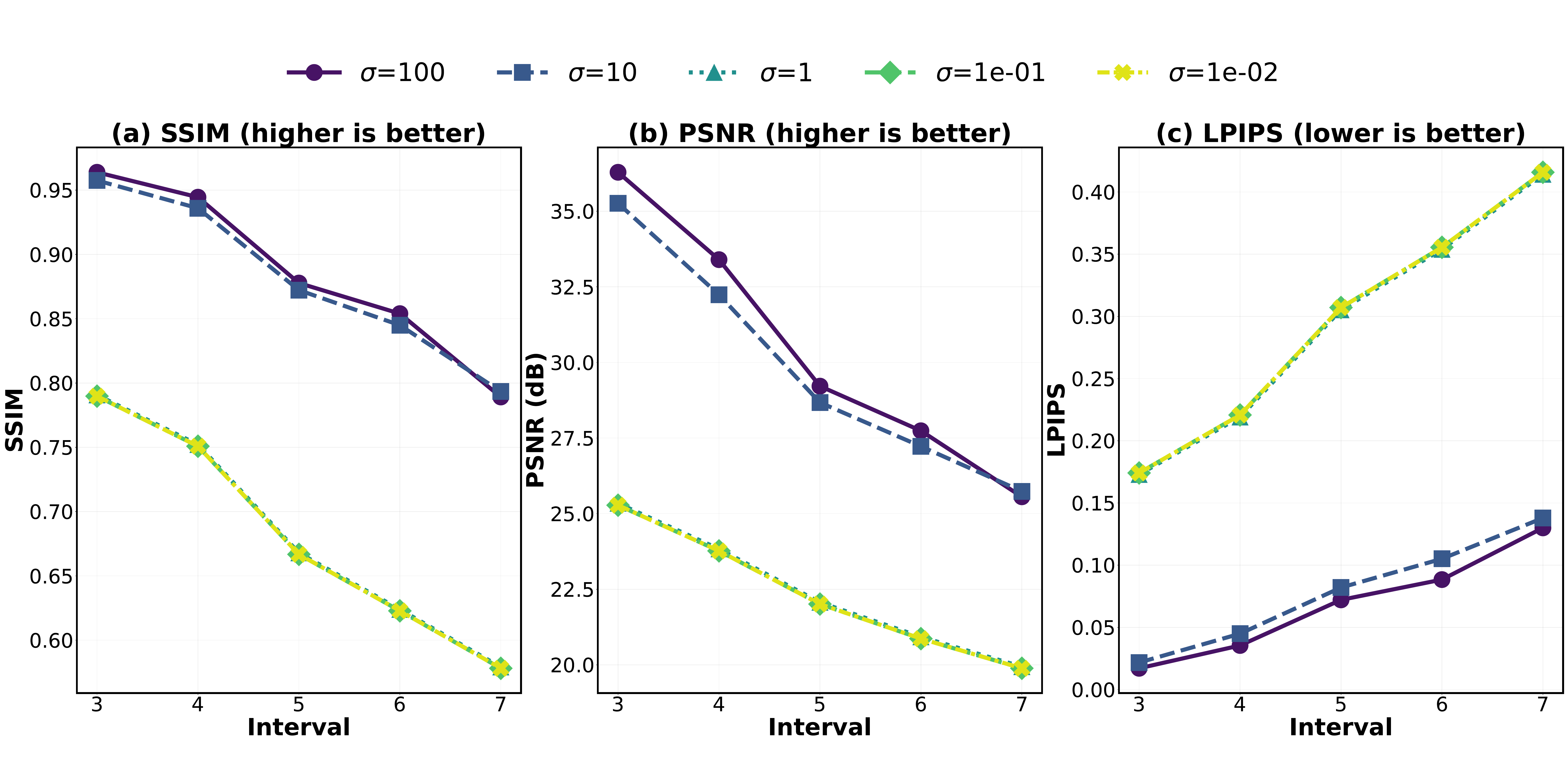}\\[-1pt]
{\small (b) Locality bandwidth $\sigma$.}
\caption{Ablations on the ridge regularizer and locality bandwidth with DiT-XL/2 at NFE\,$=$\,10.}
\label{fig:ablation_lambda_sigma}
\end{figure}

\subsection{Detailed Ablation Studies}\label{sec:appendix_ablation}

This subsection reports the complete ablation sweeps.

\paragraph{Effect of $\lambda$ and $\sigma$.}
Figure~\ref{fig:ablation_lambda_sigma} sweeps the ridge regularizer $\lambda_b$ and locality bandwidth $\sigma^{(b)}$ over their typical operating ranges. Quality is generally robust to a wide range of $\lambda$ values, with a clear plateau where SSIM is maximized; very small $\lambda$ induces ill-conditioning in the MLS solve, while very large $\lambda$ over-shrinks the coefficients toward the uniform average and degrades fidelity. Similarly, $\sigma$ controls how aggressively recent observations are emphasized: too-small $\sigma$ effectively reduces to a one-step extrapolator, while too-large $\sigma$ loses locality. The optima of both curves correspond to the default setting used in our main experiments.


\paragraph{Effect of MLS degree and window.}
Table~\ref{tab:ablation_dw} reports a grid sweep over valid polynomial degree--MLS window configurations on DiT-XL/2 at NFE $=10$. Increasing $D$ from $1$ to $4$ consistently improves all quality metrics (e.g., SSIM rises from $0.636$ at $D1W4$ to $0.843$ at $D4W5$), confirming that smaller basis libraries are too rigid for the heterogeneous feature dynamics observed in diffusion models. Within each degree, the optimum window size is small ($W\!=\!4$--$5$); larger windows mildly degrade quality because they pull in features from too-distant timesteps. Our default configuration $D\!=\!4, W\!=\!5$ achieves the best PSNR/LPIPS and near-best SSIM while also being the fastest among the high-quality configurations.

\begin{table*}[t]
\caption{Ablation on the MLS polynomial degree $D$ and history window $W$ with DiT-XL/2 at NFE\,$=$\,10.}\label{tab:ablation_dw}
\centering
\setlength{\tabcolsep}{2.0pt}
\renewcommand{\arraystretch}{1.08}
\begin{tabular}{lcccccccc}
\toprule
Config & SSIM$\uparrow$ & PSNR$\uparrow$ & LPIPS$\downarrow$ & FID$\downarrow$ & IS$\uparrow$ & sFID$\downarrow$ & Latency (s)$\downarrow$ & Speedup$\uparrow$ \\
\midrule
D1W4 & 0.636 & 21.117 & 0.283 & 9.449  & 165.282 & 16.268 & 0.410 & 2.37$\times$ \\
D1W5 & 0.579 & 20.179 & 0.348 & 16.084 & 125.705 & 23.489 & 0.419 & 2.32$\times$ \\
D1W6 & 0.538 & 19.496 & 0.400 & 24.096 & 95.156  & 31.739 & 0.426 & 2.28$\times$ \\
D1W7 & 0.513 & 19.019 & 0.437 & 31.397 & 76.817  & 39.063 & 0.433 & 2.24$\times$ \\
D1W8 & 0.499 & 18.752 & 0.457 & 36.362 & 67.100  & 43.727 & 0.442 & 2.20$\times$ \\
\midrule
D2W4 & 0.800 & 24.029 & 0.129 & 3.179 & 224.385 & 6.757  & 0.414 & 2.35$\times$ \\
D2W5 & 0.769 & 23.262 & 0.155 & 3.642 & 219.054 & 8.052  & 0.423 & 2.30$\times$ \\
D2W6 & 0.741 & 22.831 & 0.180 & 4.507 & 208.800 & 9.849  & 0.430 & 2.26$\times$ \\
D2W7 & 0.706 & 22.076 & 0.211 & 5.705 & 196.302 & 12.099 & 0.438 & 2.22$\times$ \\
D2W8 & 0.681 & 21.662 & 0.231 & 6.814 & 185.535 & 13.810 & 0.449 & 2.16$\times$ \\
\midrule
D3W4 & 0.821 & 25.014 & 0.104 & 3.702 & 210.091 & 5.237 & 0.419 & 2.32$\times$ \\
D3W5 & 0.835 & 25.187 & 0.101 & 3.102 & 220.682 & 5.554 & 0.429 & 2.26$\times$ \\
D3W6 & 0.823 & 24.557 & 0.109 & 3.108 & 221.640 & 6.112 & 0.435 & 2.23$\times$ \\
D3W7 & 0.809 & 23.655 & 0.120 & 3.256 & 219.145 & 6.844 & 0.443 & 2.19$\times$ \\
D3W8 & 0.795 & 22.869 & 0.132 & 3.482 & 216.192 & 7.584 & 0.450 & 2.16$\times$ \\
\midrule
D4W5 & 0.843 & 25.926 & 0.092 & 2.955 & 226.384 & 5.489 & 0.437 & 2.22$\times$ \\
D4W6 & 0.844 & 25.833 & 0.093 & 2.917 & 226.256 & 5.701 & 0.440 & 2.21$\times$ \\
D4W7 & 0.840 & 25.560 & 0.095 & 2.921 & 226.431 & 5.940 & 0.445 & 2.18$\times$ \\
D4W8 & 0.836 & 25.209 & 0.099 & 2.945 & 226.159 & 6.157 & 0.457 & 2.12$\times$ \\
\bottomrule
\end{tabular}
\end{table*}

\paragraph{Fixed Weight Selection.}
In our default instantiation, $\pi_{t_i}$ is determined by a lightweight grid search over the simplex on a small calibration set, evaluated using SSIM, PSNR, and LPIPS against a reference run (Figure~\ref{fig:ssim-heatmap}); the resulting weights are reused at inference. 
This formulation also leaves room for finer-grained weight designs when needed.

\begin{figure}[t]
\centering
\begin{overpic}[width=0.95\linewidth]{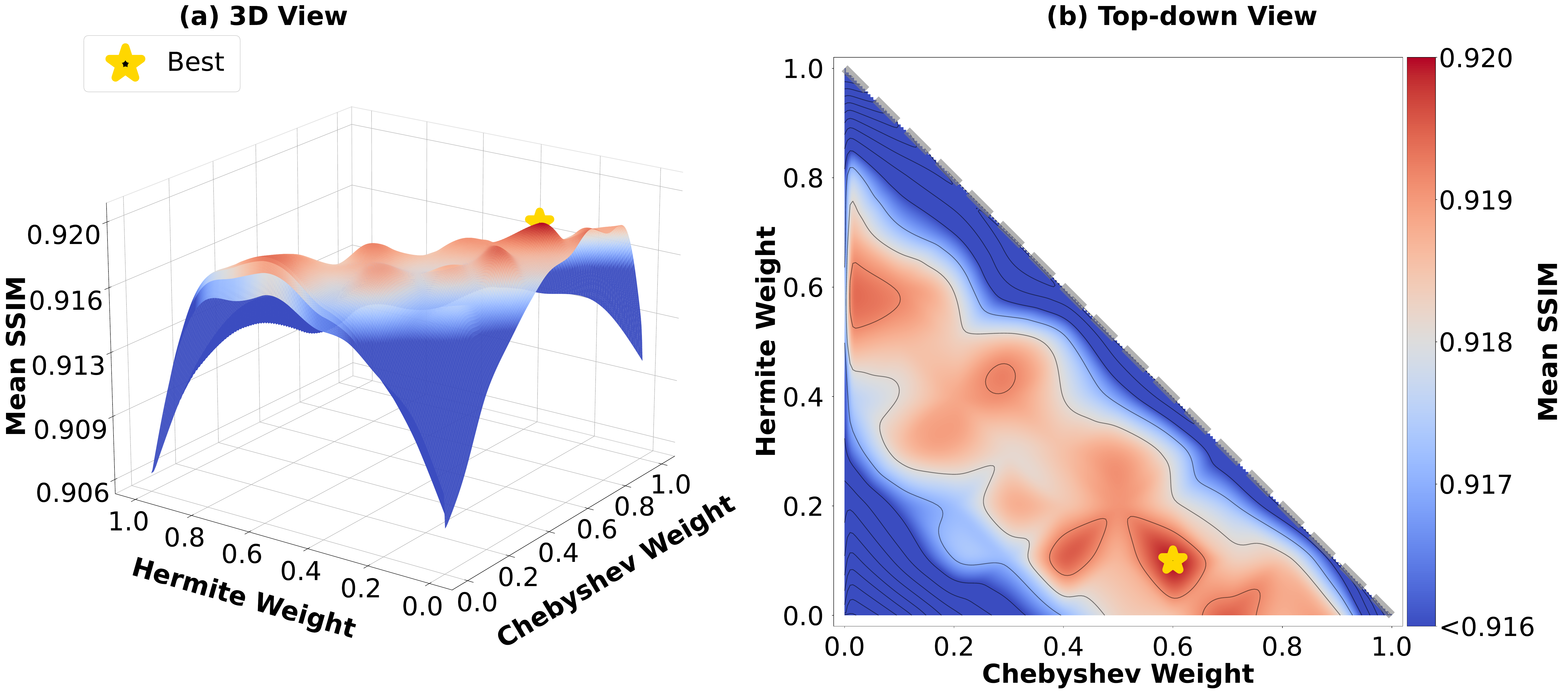}
\end{overpic}
\caption{SSIM landscape of HybridFF over the basis-weight simplex on a small FLUX.1-dev calibration set. The optimum (yellow star) lies inside the simplex at $(\pi^{(\mathrm{Chebyshev})},\pi^{(\mathrm{Hermite})},\pi^{(\mathrm{Taylor})})=(0.6,0.1,0.3)$.}
\label{fig:ssim-heatmap}
\end{figure}

\subsection{More Qualitative Results}\label{sec:appendix_more_qualitative}

This subsection presents additional qualitative comparisons for both HybridFF(F) and HybridFF(A), where F/A denote fixed/adaptive fusion weights.

\begin{figure*}[!t]
\centering
\includegraphics[width=0.485\textwidth]{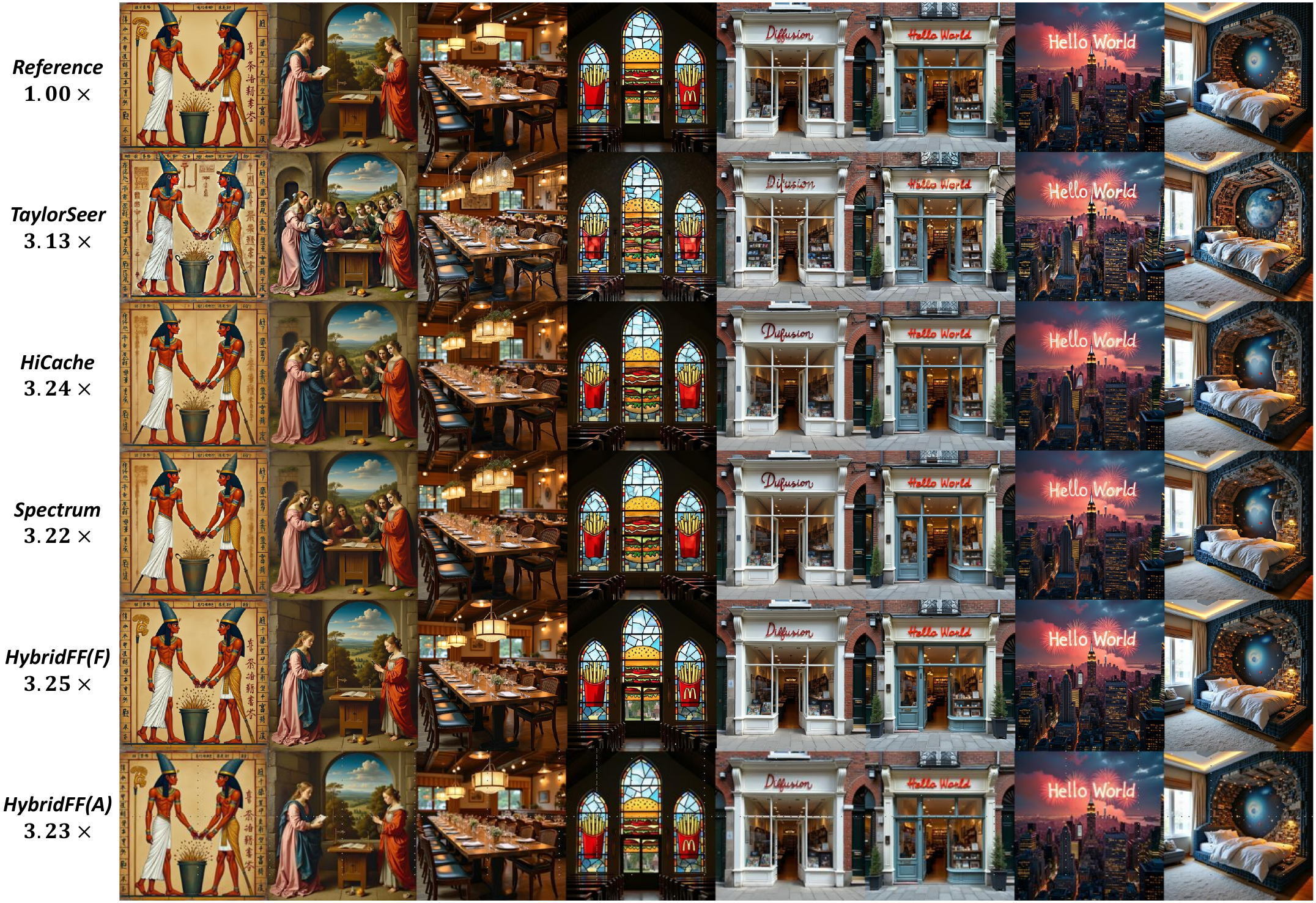}\hfill
\includegraphics[width=0.485\textwidth]{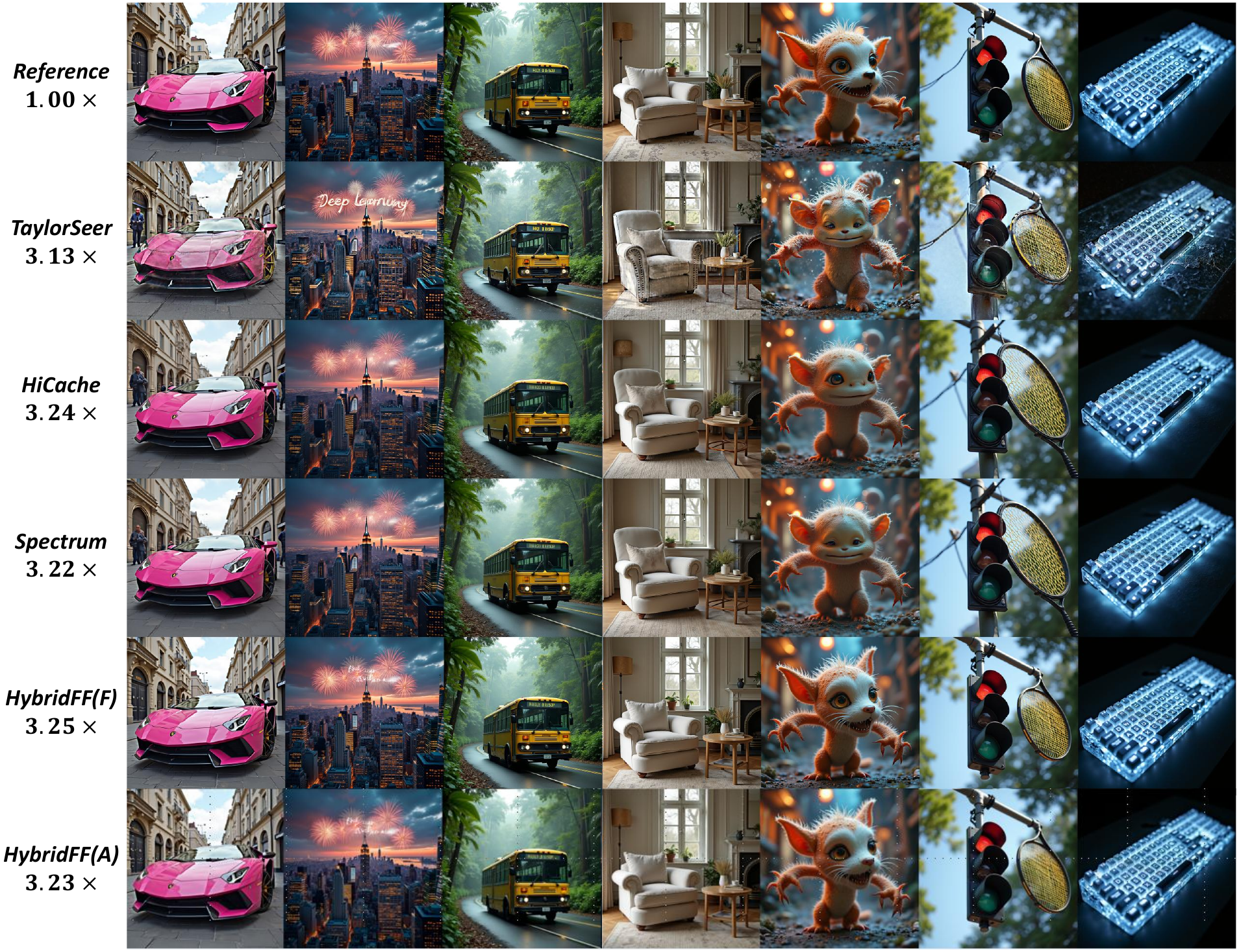}
\caption{Additional qualitative comparisons on FLUX.1-dev at NFE\,$=$\,10. The left and right panels show two prompt sets. F/A denote fixed/adaptive fusion weights.}
\label{fig:appendix_flux_qual}
\end{figure*}

\paragraph{Text-to-image (FLUX.1-dev).}
Figure~\ref{fig:appendix_flux_qual} reports two independent prompt sets at NFE\,$=$\,10. Both HybridFF variants better preserve the prompted subjects, object counts, and fine detail than the single-basis baselines, with HybridFF(A) giving the closest match to the 50-step reference in difficult regions.

\paragraph{Text-to-video (HunyuanVideo).}
Figure~\ref{fig:appendix_hunyuan_qual} shows a representative clip generated at NFE\,$=$\,10. HybridFF(F) keeps the strongest acceleration, while HybridFF(A) stays closest to the 50-step reference across frames and better preserves geometry and surface texture.

\begin{figure*}[!t]
\centering
\includegraphics[width=0.80\textwidth]{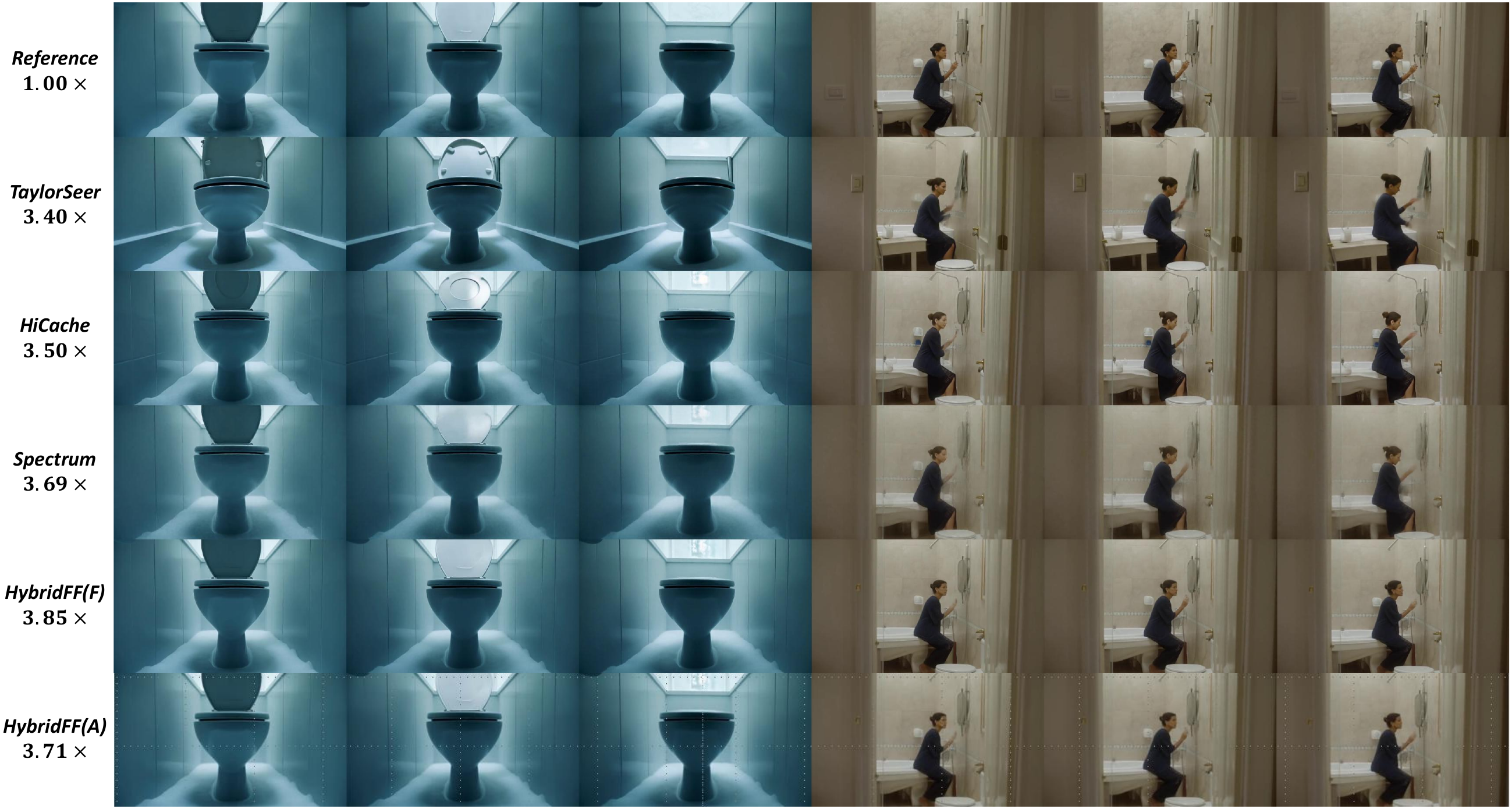}
\caption{Additional qualitative comparison on HunyuanVideo at NFE\,$=$\,10. F/A denote fixed/adaptive fusion weights.}
\label{fig:appendix_hunyuan_qual}
\end{figure*}

%% file: src_combination/conclusion.tex
\section{Conclusion}\label{sec:conclusion}

We presented \textbf{HybridFF}, a training-free, plug-and-play feature forecaster that accelerates diffusion sampling by combining complementary temporal bases within a unified MLS framework. {The method is motivated by the empirical observation that intermediate transformer features follow structured trajectories along denoising steps, making skipped-step features predictable from cached full-step observations.} Fusion happens in coefficient space on the simplex with calibrated weights, followed by an anchored residual reconstruction; the predictor is linear, dimension-free, much cheaper than a full block evaluation, and provably no worse than the best single branch.
We also design two weighting strategies: a Fixed variant calibrated offline for speed, and an Adaptive variant updated online from full-step errors for fidelity.

 Across DiT, FLUX.1-dev, SD3.5-Large, and HunyuanVideo, HybridFF dominates single-basis baselines on every primary fidelity metric, and the optimal mixture varies substantially across backbones, confirming that no fixed temporal prior is universally best. HybridFF (Fixed) emphasizes latency reduction, while HybridFF (Adaptive) improves fidelity by updating fusion weights from full-step feature prediction errors without extra denoiser evaluations. 
 
The current study leaves broader basis libraries, schedule-aware full-step/predict-step policies, and finer reliability estimation underexplored. Future work can study additional bases such as rational Pad\'{e} approximants~\cite{cui2026tcpade}, jointly tune refresh schedules and fusion weights under latency budgets, and combine HybridFF with step-reduction samplers.

%% file: src_combination/reference.bib
@inproceedings{peebles2023scalable,
  author = {Peebles, William and Xie, Saining},
  title = {Scalable Diffusion Models with Transformers},
  booktitle = {Proceedings of the IEEE/CVF International Conference on Computer Vision},
  pages = {4172--4182},
  year = {2023}
}

@inproceedings{chen2024pixart,
  author = {Chen, Junsong and Yu, Jincheng and Ge, Chongjian and others},
  title = {{PixArt-$\alpha$}: Fast Training of Diffusion Transformer for Photorealistic Text-to-Image Synthesis},
  booktitle = {International Conference on Learning Representations (ICLR)},
  year = {2024}
}

@misc{kong2024hunyuanvideo,
  author = {Weijie Kong and Qi Tian and Zijian Zhang and Rox Min and Zuozhuo Dai and others},
  title = {{HunyuanVideo}: A Systematic Framework for Large Video Generative Models},
  year = {2024},
  eprint = {2412.03603},
  archivePrefix = {arXiv},
  primaryClass = {cs.CV}
}

@inproceedings{liu2025teacache,
  author = {Liu, Feng and Zhang, Shiwei and Wang, Xiaofeng and Wei, Yujie and Qiu, Haonan and Zhao, Yuzhong and Zhang, Yingya and Ye, Qixiang and Wan, Fang},
  title = {Timestep Embedding Tells: It's Time to Cache for Video Diffusion Model},
  booktitle = {Proceedings of the IEEE/CVF Conference on Computer Vision and Pattern Recognition (CVPR)},
  pages = {7353--7363},
  year = {2025}
}

@misc{zhu2026tap,
  author = {Haowei Zhu and Tingxuan Huang and Xing Wang and Tianyu Zhao and Jiexi Wang and Weifeng Chen and Xurui Peng and Fangmin Chen and Junhai Yong and Bin Wang},
  title = {{TAP}: A Token-Adaptive Predictor Framework for Training-Free Diffusion Acceleration},
  year = {2026},
  eprint = {2603.03792},
  archivePrefix = {arXiv},
  primaryClass = {cs.CV}
}

@inproceedings{lu2022dpm,
  author = {Lu, Cheng and Zhou, Yuhao and Bao, Fan and Chen, Jianfei and LI, Chongxuan and Zhu, Jun},
  title = {{DPM-Solver}: A Fast {ODE} Solver for Diffusion Probabilistic Model Sampling in Around 10 Steps},
  booktitle = {Advances in Neural Information Processing Systems (NeurIPS)},
  volume = {35},
  pages = {5775--5787},
  year = {2022}
}

@inproceedings{song2023consistency,
  author = {Song, Yang and Dhariwal, Prafulla and Chen, Mark and Sutskever, Ilya},
  title = {Consistency Models},
  booktitle = {Proceedings of the 40th International Conference on Machine Learning (ICML)},
  year = {2023}
}

@inproceedings{li2024faster,
  author = {Li, Senmao and Hu, Taihang and van de Weijer, Joost and Khan, Fahad S and Liu, Tao and Li, Linxuan and Yang, Shiqi and Wang, Yaxing and Cheng, Ming-Ming and Yang, Jian},
  title = {Faster Diffusion: Rethinking the Role of the Encoder for Diffusion Model Inference},
  booktitle = {Advances in Neural Information Processing Systems (NeurIPS)},
  volume = {37},
  pages = {85203--85240},
  year = {2024}
}

@inproceedings{fang2023pruning,
  author = {Fang, Gongfan and Ma, Xinyin and Wang, Xinchao},
  title = {Structural Pruning for Diffusion Models},
  booktitle = {Advances in Neural Information Processing Systems},
  volume = {36},
  pages = {16716--16728},
  year = {2023}
}

@inproceedings{ma2024deepcache,
  author = {Ma, Xinyin and Fang, Gongfan and Wang, Xinchao},
  title = {{DeepCache}: Accelerating Diffusion Models for Free},
  booktitle = {Proceedings of the IEEE/CVF Conference on Computer Vision and Pattern Recognition (CVPR)},
  pages = {15762--15772},
  year = {2024},
}

@inproceedings{liu2025taylorseer,
  author = {Liu, Jiacheng and Zou, Chang and Lyu, Yuanhuiyi and Chen, Junjie and Zhang, Linfeng},
  title = {From Reusing to Forecasting: Accelerating Diffusion Models with {TaylorSeers}},
  booktitle = {Proceedings of the IEEE/CVF International Conference on Computer Vision (ICCV)},
  pages = {15853--15863},
  year = {2025}
}

@article{yu2025abcache,
  author = {Yu, Zichao and Zou, Zhen and Shao, Guojiang and Zhang, Chenwei and Xu, Shengze and Huang, Jie and Zhao, Feng and Cun, Xiaodong and Zhang, Wenyi},
  title = {{AB-Cache}: Training-Free Acceleration of Diffusion Models via {Adams-Bashforth} Cached Feature Reuse},
  journal = {Proceedings of the 33rd ACM International Conference on Multimedia},
  year = {2025}
}

@inproceedings{feng2026hicache,
  author = {Feng, Liang and Zheng, Shikang and Liu, Jiacheng and Lin, Yuqi and Zhou, Qinming and Cai, Peiliang and Wang, Xinyu and Chen, Junjie and Zou, Chang and Ma, Yue and others},
  title = {{HiCache}: A Plug-in {Scaled-Hermite} Upgrade for {Taylor}-Style Cache-then-Forecast Diffusion Acceleration},
  booktitle = {The Fourteenth International Conference on Learning Representations (ICLR)},
  year = {2026}
}

@inproceedings{han2026spectrum,
  author = {Han, Jiaqi and Shi, Juntong and Li, Puheng and Ye, Haotian and Guo, Qiushan and Ermon, Stefano},
  title = {Adaptive Spectral Feature Forecasting for Diffusion Sampling Acceleration},
  booktitle = {Proceedings of the IEEE/CVF Conference on Computer Vision and Pattern Recognition (CVPR)},
  year = {2026}
}

@inproceedings{cui2026tcpade,
  author = {Cui, Benlei and He, Shaoxuan and Huang, Bukun and Ye, Zhizeng and Sun, Yunyun and Huang, Longtao and Xue, Hui and Yang, Yang and Tang, Jingqun and Zhao, Zhou and others},
  title = {{TC-Pad\'{e}}: Trajectory-Consistent {Pad\'{e}} Approximation for Diffusion Acceleration},
  booktitle = {Proceedings of the IEEE/CVF Conference on Computer Vision and Pattern Recognition (CVPR)},
  year = {2026}
}

@misc{cai2026lesa,
  author = {Cai, Peiliang and Liu, Jiacheng and Xu, Haowen and Wang, Xinyu and Zou, Chang and Zhang, Linfeng},
  title = {{LESA}: Learnable Stage-Aware Predictors for Diffusion Model Acceleration},
  year = {2026},
  eprint = {2602.20497},
  archivePrefix = {arXiv},
  primaryClass = {cs.CV}
}

@misc{wang2026zeus,
  author = {Wang, Yixiao and Jiang, Ting and Shao, Zishan and others},
  title = {{ZEUS}: Accelerating Diffusion Models with Only Second-Order Predictor},
  year = {2026},
  eprint = {2604.01552},
  archivePrefix = {arXiv},
  primaryClass = {cs.LG}
}

@inproceedings{zheng2026foca,
  author = {Zheng, Shikang and Feng, Liang and Wang, Xinyu and others},
  title = {Forecast Then Calibrate: Feature Caching as {ODE} for Efficient Diffusion Transformers},
  booktitle = {Proceedings of the AAAI Conference on Artificial Intelligence},
  volume = {40},
  pages = {13449--13457},
  year = {2026}
}

@article{ho2020ddpm,
  author = {Ho, Jonathan and Jain, Ajay and Abbeel, Pieter},
  title = {Denoising Diffusion Probabilistic Models},
  journal = {Advances in Neural Information Processing Systems},
  volume = {33},
  pages = {6840--6851},
  year = {2020}
}

@inproceedings{rombach2022ldm,
  author = {Rombach, Robin and Blattmann, Andreas and Lorenz, Dominik and others},
  title = {High-Resolution Image Synthesis with Latent Diffusion Models},
  booktitle = {Proceedings of the IEEE/CVF Conference on Computer Vision and Pattern Recognition (CVPR)},
  year = {2022}
}

@inproceedings{saharia2022imagen,
  author = {Saharia, Chitwan and Chan, William and Saxena, Saurabh and others},
  title = {Photorealistic Text-to-Image Diffusion Models with Deep Language Understanding},
  booktitle = {Advances in Neural Information Processing Systems (NeurIPS)},
  volume = {35},
  pages = {36479--36494},
  year = {2022}
}

@misc{podell2023sdxl,
  author = {Podell, Dustin and English, Zion and Lacey, Kyle and others},
  title = {{SDXL}: Improving Latent Diffusion Models for High-Resolution Image Synthesis},
  year = {2023},
  eprint = {2307.01952},
  archivePrefix = {arXiv},
  primaryClass = {cs.CV}
}

@misc{esser2024flux,
  author = {{Black Forest Labs}},
  title = {{FLUX.1}: A Suite of Open Image Generation Models},
  year = {2024},
  howpublished = {\url{https://github.com/black-forest-labs/flux}},
  note = {Accessed 2024}
}

@inproceedings{ho2022videodiffusion,
  author = {Ho, Jonathan and Salimans, Tim and Gritsenko, Alexey and others},
  title = {Video Diffusion Models},
  booktitle = {Advances in Neural Information Processing Systems (NeurIPS)},
  volume = {35},
  pages = {8633--8646},
  year = {2022}
}

@inproceedings{singer2022makeavideo,
  author = {Singer, Uriel and Polyak, Adam and Hayes, Thomas and others},
  title = {{Make-A-Video}: Text-to-Video Generation Without Text-Video Data},
  booktitle = {The Eleventh International Conference on Learning Representations (ICLR)},
  year = {2023}
}

@misc{blattmann2023,
  author = {Blattmann, Andreas and Dockhorn, Tim and Kulal, Sumith and others},
  title = {Stable Video Diffusion: Scaling Latent Video Diffusion Models to Large Datasets},
  year = {2023},
  eprint = {2311.15127},
  archivePrefix = {arXiv},
  primaryClass = {cs.CV}
}

@article{saharia2021srdiff,
  author = {Saharia, Chitwan and Ho, Jonathan and Chan, William and others},
  title = {Image Super-Resolution via Iterative Refinement},
  journal = {IEEE Transactions on Pattern Analysis and Machine Intelligence},
  volume = {45},
  number = {4},
  pages = {4713--4726},
  year = {2023},
  doi = {10.1109/TPAMI.2022.3204461}
}

@inproceedings{lugmayr2022repaint,
  author = {Lugmayr, Andreas and Danelljan, Martin and Romero, Andr{\'e}s and others},
  title = {{RePaint}: Inpainting Using Denoising Diffusion Probabilistic Models},
  booktitle = {Proceedings of the IEEE/CVF Conference on Computer Vision and Pattern Recognition (CVPR)},
  pages = {11451--11461},
  year = {2022}
}

@inproceedings{song2021ddim,
  author = {Song, Jiaming and Meng, Chenlin and Ermon, Stefano},
  title = {Denoising Diffusion Implicit Models},
  booktitle = {International Conference on Learning Representations (ICLR)},
  year = {2021}
}

@inproceedings{salimans2022progressive,
  author = {Salimans, Tim and Ho, Jonathan},
  title = {Progressive Distillation for Fast Sampling of Diffusion Models},
  booktitle = {International Conference on Learning Representations (ICLR)},
  year = {2022}
}

@inproceedings{shang2023ptqdiffusion,
  author = {Shang, Yuzhang and Yuan, Zhihang and Xie, Bin and others},
  title = {Post-Training Quantization on Diffusion Models},
  booktitle = {Proceedings of the IEEE/CVF Conference on Computer Vision and Pattern Recognition (CVPR)},
  pages = {1972--1981},
  year = {2023},
}

@inproceedings{bolya2023tokenmerging,
  author = {Bolya, Daniel and Hoffman, Judy},
  title = {Token Merging for Fast Stable Diffusion},
  booktitle = {IEEE/CVF Conference on Computer Vision and Pattern Recognition Workshops (CVPRW)},
  pages = {4599--4603},
  year = {2023},
  doi = {10.1109/CVPRW59228.2023.00484}
}

@inproceedings{li2023qdiffusion,
  author = {Li, Xiuyu and Liu, Yijiang and Lian, Long and others},
  title = {{Q-Diffusion}: Quantizing Diffusion Models},
  booktitle = {Proceedings of the IEEE/CVF International Conference on Computer Vision (ICCV)},
  pages = {17489--17499},
  year = {2023},
  doi = {10.1109/ICCV51070.2023.01608}
}

@inproceedings{yuan2024ditfastattn,
  author = {Yuan, Zhihang and Zhang, Hanling and Lu, Pu and others},
  title = {{DiTFastAttn}: Attention Compression for Diffusion Transformer Models},
  booktitle = {Advances in Neural Information Processing Systems (NeurIPS)},
  volume = {37},
  pages = {1196--1219},
  year = {2024}
}

@misc{selvaraju2024fora,
  author = {Selvaraju, Pratheba and Ding, Tianyu and Chen, Tianyi and others},
  title = {{FORA}: Fast-Forward Caching in Diffusion Transformer Acceleration},
  year = {2024},
  eprint = {2407.01425},
  archivePrefix = {arXiv},
  primaryClass = {cs.CV}
}

@inproceedings{zou2025toca,
  author = {Zou, Chang and Liu, Xuyang and Liu, Ting and others},
  title = {Accelerating Diffusion Transformers with Token-Wise Feature Caching},
  booktitle = {The Thirteenth International Conference on Learning Representations (ICLR)},
  year = {2025}
}

@inproceedings{liu2025smoothcache,
  author = {Liu, Joseph and Geddes, Joshua and Guo, Ziyu and others},
  title = {{Smooth Cache}: A Universal Inference Acceleration Technique for Diffusion Transformers},
  booktitle = {IEEE/CVF Conference on Computer Vision and Pattern Recognition Workshops (CVPRW)},
  pages = {3220--3229},
  year = {2025},
}

@inproceedings{sun2025unicp,
  author = {Sun, Wenzhang and Hou, Qirui and Di, Donglin and others},
  title = {{UniCP}: A Unified Caching and Pruning Framework for Efficient Video Generation},
  booktitle = {Proceedings of the 7th ACM International Conference on Multimedia in Asia},
  year = {2025},
  doi = {10.1145/3743093.3770981}
}

@article{liu2025speca,
  author = {Liu, Jiacheng and Zou, Chang and Lyu, Yuanhuiyi and others},
  title = {{SpeCa}: Accelerating Diffusion Transformers with Speculative Feature Caching},
  journal = {Proceedings of the 33rd ACM International Conference on Multimedia},
  year = {2025}
}

@misc{chen2026svdcache,
  author = {Chen, Guantao and Zheng, Shikang and Lin, Yuqi and others},
  title = {Forecast the Principal, Stabilize the Residual: Subspace-Aware Feature Caching for Efficient Diffusion Transformers},
  year = {2026},
  eprint = {2601.07396},
  archivePrefix = {arXiv},
  primaryClass = {cs.CV}
}

@misc{zou2026disca,
  author = {Zou, Chang and Li, Changlin and Li, Yang and others},
  title = {{DisCa}: Accelerating Video Diffusion Transformers with Distillation-Compatible Learnable Feature Caching},
  year = {2026},
  eprint = {2602.05449},
  archivePrefix = {arXiv},
  primaryClass = {cs.CV}
}

@misc{esser2024,
      title={Scaling Rectified Flow Transformers for High-Resolution Image Synthesis}, 
      author={Patrick, Esser and Sumith, Kulal and Andreas, Blattmann and others},
      year={2024},
      eprint={2403.03206},
      archivePrefix={arXiv},
      primaryClass={cs.CV},
      url={https://arxiv.org/abs/2403.03206}, 
}
